\def\eqref#1{(\ref{#1})}
\def\1{\bm{1}}
\DeclareMathAlphabet{\mathsfit}{\encodingdefault}{\sfdefault}{m}{sl}
\SetMathAlphabet{\mathsfit}{bold}{\encodingdefault}{\sfdefault}{bx}{n}
\newcommand{\KL}{D_{\mathrm{KL}}}
\DeclareMathOperator*{\argmin}{arg\,min}
\newcommand{\bx}{\mathbf{x}}
\newcommand{\by}{\mathbf{y}}
\newcommand{\bW}{\mathbf{W}}
\newcommand{\bX}{\mathbf{X}}
\newcommand{\bY}{\mathbf{Y}}
\newcommand{\calB}{{\mathcal{B}}}
\newcommand{\calC}{{\mathcal{C}}}
\newcommand{\calJ}{{\mathcal{J}}}
\newcommand{\calL}{{\mathcal{L}}}
\newcommand{\calM}{{\mathcal{M}}}
\newcommand{\calN}{{\mathcal{N}}}
\newcommand{\calP}{{\mathcal{P}}}
\newcommand{\calR}{{\mathcal{R}}}
\newcommand{\calT}{{\mathcal{T}}}
\newcommand{\calV}{{\mathcal{V}}}
\newcommand{\calW}{{\mathcal{W}}}
\newcommand{\calX}{{\mathcal{X}}}
\newcommand{\bbA}{\mathbb{A}}
\newcommand{\bbE}{\mathbb{E}}
\newcommand{\bbM}{\mathbb{M}}
\newcommand{\bbN}{\mathbb{N}}
\newcommand{\bbP}{\mathbb{P}}
\newcommand{\bbQ}{\mathbb{Q}}
\newcommand{\bbR}{\mathbb{R}}
\def\[#1\]{\begin{align}#1\end{align}}
\newcommand{\norm}[1]{\left\lVert{#1}\right\rVert}
\newcommand{\ie}{\textit{i}.\textit{e}., }
\newtheorem{theorem}{Theorem}[section]
\newtheorem{remark}[theorem]{Remark}
\newcommand{\mb}{\mathbf}
\newcommand{\ora}{\overrightarrow}
\newcommand{\ola}{\overleftarrow}
\def\[#1\]{\begin{align}#1\end{align}}
\newcommand{\blue}[1]{{\color{blue}{#1}}}
\definecolor{citationcolor}{RGB}{80, 90, 180}
\newcommand{\ccolor}[1]{{\color{citationcolor}{#1}}}
\definecolor{mygray}{gray}{0.95}
\definecolor{mygray}{gray}{0.95}
\newcommand{\graybox}[1]{%
\begingroup
\setlength{\fboxsep}{0pt}%
\colorbox{mygray} {
\begin{minipage}{\linewidth}
\vspace{-0.5em}%
{#1}%
\end{minipage}%
}
\endgroup
}
\definecolor{background}{RGB}{240, 240, 250}
\newcommand{\cellbg}{\cellcolor{background}}
\DeclareRobustCommand{\best}[1]{\hl{#1}}
\title{Multi-Marginal Schrödinger Bridge Matching}
\author{\name Byoungwoo Park \email bw.park@kaist.ac.kr \\
      \addr KAIST
      \ANDD
      \name Juho Lee \email juholee@kaist.ac.kr \\
      \addr KAIST}
\begin{document}

\maketitle

\begin{abstract}
Understanding the continuous evolution of populations from discrete temporal snapshots is a critical research challenge, particularly in fields like developmental biology and systems medicine where longitudinal tracking of individual entities is often impossible. Such trajectory inference is vital for unraveling the mechanisms of dynamic processes. While Schrödinger Bridge (SB) offer a potent framework, their traditional application to pairwise time points can be insufficient for systems defined by multiple intermediate snapshots. This paper introduces Multi-Marginal Schrödinger Bridge Matching (MSBM), a novel algorithm specifically designed for the multi-marginal SB problem. MSBM extends iterative Markovian fitting (IMF) to effectively handle multiple marginal constraints. This technique ensures robust enforcement of all intermediate marginals while preserving the continuity of the learned global dynamics across the entire trajectory. Empirical validations on synthetic data and real-world single-cell RNA sequencing datasets demonstrate the competitive or superior performance of MSBM in capturing complex trajectories and respecting intermediate distributions, all with notable computational efficiency. Code is available at
\url{https://github.com/bw-park/MSBM}.
\end{abstract}

\section{Introduction}

Understanding the continuous evolution of populations from discrete temporal snapshots represents a significant challenge in various scientific disciplines, particularly in fields like developmental biology~\citep{schiebinger2019optimal, bunne2023learning} and systems medicine~\citep{manton2008cohort} where tracking individual entities longitudinally is often unfeasible. The ability to infer trajectories from such snapshot data is crucial for elucidating the underlying mechanisms of dynamic processes. The Schrödinger Bridge (SB) problem, originally rooted in statistical mechanics~\citep{schrodinger1931umkehrung}, has garnered substantial interest in machine learning as an entropy-regularized, continuous-time formulation of optimal transport~\citep{leonard2013survey,mikami2021stochastic}. It seeks to identify the most probable evolutionary path between prescribed initial and terminal distributions, and has been successfully employed in generative modeling~\citep{bortoli2021diffusion, vargas2021solving, chen2022likelihood, liu2023i2sb, shi2024diffusion, peluchetti2023diffusion, liu2024generalized, debortoli2024schr}.

However, many real-world scenarios present observations or constraints at multiple time points, not just at the beginning and end of a process. For instance, in single-cell RNA sequencing (scRNA-seq) experiments, which are pivotal for studying complex biological processes like cell differentiation, cells are typically destroyed upon measurement~\citep{macosko2015highly, klein2015droplet, buenrostro2015single}. This destructive nature makes it impossible to track individual cells over time, thus necessitating the inference of developmental trajectories from population-level snapshots collected at several intermediate stages. Similarly, meteorological systems may have partial observations across various times~\citep{moon2019visualizing, chu2016single}. Such situations necessitate a multi-marginal generalization of the SB problem (mSBP), where the path measure must align with prescribed marginal distributions at multiple intermediate time points. While the traditional SB framework offers a powerful approach, its standard application to pairwise time points can prove insufficient for systems characterized by multiple intermediate snapshots. Although more specialized methods for mSBP have recently been developed~\citep{koshizuka2022neural, chen2023deep, shen2024multi}, the direct application of some multi-marginal approaches can lead to error accumulation if not carefully managed, particularly when learned controls are even slightly inaccurate. These challenges highlight the need for robust and scalable solutions for the mSBP that can effectively integrate information across all observed time points.

This paper introduces Multi-Marginal Schrödinger Bridge Matching (MSBM), a novel algorithm specifically developed to address the multi-marginal SB problem by building upon and extending the Iterative Markovian Fitting (IMF) algoritmh~\citep{shi2024diffusion, peluchetti2022nondenoising}. MSBM is designed to effectively manage multiple marginal constraints by constructing local SBs on each interval and seamlessly integrating them. This local construction strategy, underpinned by a shared global parametrization of control functions, ensures the robust enforcement of all intermediate marginal distributions while crucially preserving the continuity of the learned global dynamics across the entire trajectory. Empirical validations conducted on synthetic datasets as well as real-world single-cell RNA sequencing data demonstrate that MSBM achieves competitive or superior performance in capturing complex trajectories and accurately respecting intermediate distributions, all while exhibiting notable computational efficiency. Our work aims to provide a robust and scalable computational method for these multi-marginal settings, addressing the critical need for consistent and tractable dynamic inference when data is available as snapshots at multiple time points.

We summarize our contributions as follows:
\begin{itemize}[leftmargin=20pt]
    \item We extend the theoretical and algorithmic foundations of SBs, including the IMF iteration and optimal control perspectives, to the challenging multi-marginal setting.
    \item We introduce an efficient modeling approach for trajectory inference, that constructs and smoothly integrates local SBs across sub-intervals, inherently allows for parallelized training, leading to significant speed-ups.
    \item  Through comprehensive experiments on both synthetic and real-world single-cell RNA sequencing data, we demonstrate that MSBM accurately models complex population dynamics and outperforms state-of-the-art methods in both trajectory fidelity and computational speed.
\end{itemize}

\textbf{Notation. }
Let $\calP_{[0, T]}$ denote the space of continuous functions taking values in $\bbR^d$ on the interval $[0, T]$. We use an uppercase letter $\bbP \in \calP_{[0,T]}$ to represent a path measure. For a path measure $\bbP \in \calP_{[0,T]}$, the marginal distribution at discrete time points $\calT = \{t_0, \dots, t_k\}$, where $0 = t_0 < t_1 < \cdots < t_k = T$ is denoted by $\bbP_{\calT} \in \calP_{\calT}$, where we define $\calP_{\calT}$ as the set of measures $\bbP$ over $\bbR^{d\times|\calT|}$. Additionally, the conditional distribution of $\bbP$, given $\calT$, is denoted by $\bbP_{|\calT} \in \calP_{[0,T]}$. Moreover, a path measure $\bbP$ can be defined as mixture. For any Borel measurable set $A \in \calB(\Omega)$, $\bbP$ can be defined by $\bbP(A) = \int_{\bbR^{d \times |\calT|}} \bbP_{|\calT}(A|\bx_{\calT})d\bbP_{\calT}(\bx_{\calT})$, where $\bbP \in \calP_{0, T}$ and $\bbP \in \calP_{\calT}$, and we use the shorthand $\bx_{\calT} := (\bx_1, \cdots, \bx_k)$ and $[0:k] := \{0, 1, \cdots, k\}$. The Kullback-Leibler (KL) divergence between two probability measures $\mu$ and $\nu$ on space $\calX$ is defined as $\KL(\mu|\nu) = \int_{\calX} \log \frac{d\mu}{d\nu}(\bX) d\mu(\bX)$ when $\mu$ is absolutely continuous with respect to $\nu$ ($\mu \ll \nu$), and $\KL(\mu|\nu) = +\infty$ otherwise.  We will often refer to probability measures on $\bbR^d$ and their Lebesgue densities interchangeably, under the standard assumption of absolute continuity. Finally, for a function $\calV : [0, T] \times \bbR^d \to \bbR$, we define the gradient and Laplcaian operators with respect to $\bx \in \bbR^d$ as $\nabla \calV$ and $\Delta \calV$, respectively, and its partial derivative with respect to time $t \in [0, T]$ as $\partial_t \calV$.

\section{Preliminaries}\label{sec:Preliminaries}
\subsection{Schrödinger Bridge Matching (SBM)}\label{sec:IMF SB}
The Schrödinger Bridge problem (SBP)~\citep{schrodinger1931umkehrung, jamison1975markov} is a stochastic optimal transport problem that seeks the optimal transport plan for endpoint marginals $\rho_0$ 
and $\rho_T$. In this paper, we focus on the dynamical representation, where a reference distribution $\mathbb{Q} \in \mathcal{P}_{[0, T]}$ is induced by the SDEs:
\[\label{eq:reference measure}
d\mathbf{X}_t = f_t(\mathbf{X}_t) \, dt + \sigma \, d\mathbf{W}_t, \quad \mathbf{X}_0 \sim \rho_0,
\]
where $f_t : \mathbb{R}^d \to \mathbb{R}^d$ is a drift, $\sigma \in \mathbb{R}$ is a diffusion, and $\mathbf{W}_t \in \mathbb{R}^d$ is a Brownian motion. With the base reference path measure $\bbQ$, the dynamic representation of the SB~\citep{pavon1991free, Pra1991ASC} is:
\[\tag{\ccolor{\texttt{SBP}}}\label{eq:SBP}
    \min_{\bbP \in \calP_{[0, T]}} \KL(\bbP | \bbQ), \quad \text{subject to } \quad \bbP_0 \sim \rho_0, \quad \bbP_T \sim \rho_T.
\]
Recent advancements in dynamical optimal transport~\citep{peluchetti2023diffusion,shi2024diffusion} have introduced a novel numerical methodology for solving~\ref{eq:SBP}. This approach reframes~\ref{eq:SBP} by decomposing its dynamical constraints into the time-evolving marginal distributions $\bbP_t$ for all $t \in [0, T]$ often characterized by drift function and the joint coupling $\bbP_{0, T}$ for a given end point marginal pairs $(\rho_0, \rho_T)$. This optimization relies on IMF~\citep{shi2024diffusion}, a technique that iteratively refines the path measure $\bbP \in \calP_{[0, T]}$. IMF alternates between two projection called Markovian and Reciprocal projections to preserve the correct endpoint marginals $(\rho_0, \rho_T)$ throughout the optimization.


\paragraph{Reciprocal Projection $\calR$.} For a given reference measure $\bbQ$ from~\eqref{eq:reference measure}, and a path measure $\bbP$  with marginals specified at end points $\calT = \{0, T\}$ the reciprocal projection is defined as:
\[\label{eq:reciprocal projection prelim}
    \calR(\bbP, \calT) := \bbP_{\calT} \bbQ_{|\calT} = \bbP_{0,T}\bbQ_{|0,T}.
\]
This projection constructs a new path measure by taking the endpoint coupling $\bbP_{0, T}$ from $\bbP$ and forming a mixture of bridge process using $\bbQ$ conditioned on these end points. Sampling from $\Pi := \calR(\bbP, \calT)$ involves drawing end points samples $(\bX_0, \bX_T) \sim \bbP_{0, T}$ and then generating a path $\bX^{\calT}_t$ between them using conditional reference measure $\bbQ_{|0, T}$ which induced by following SDEs, for any $(\bx_0, \bx_T)$:
\[\label{eq:conditioned SDE_prelim}
    d \bX^{\calT}_t = \left[f_t(\bX^{\calT}_t) + \sigma^2 \nabla \log \bbQ_{T|t}(\bx_{T} | \bX^{\calT}_t) \right]dt + \sigma d\bW_t, \quad \bX^{\calT}_0 = \bx_0,
\]
If $\bbQ_{|0, T}$ has tractable bridge formulation, for example, when $\bbQ$ is chosen as a Brownian motion $\ie d\bX_t = \sigma d\bW_t$, sampling the path at time $t$ given the endpoints can be performed as:
\[\label{eq:reciprocal sampling}
    \bX^{\calT}_t \sim \calN\left( (1 - \tfrac{t}{T}) \bX_0 + \tfrac{t}{T}\bX_T,  t(1 - \tfrac{t}{T})   \sigma^2 \right), \quad \text{where} \; (\bX_0, \bX_T)\sim \bbP_{0, T}.
\]

\paragraph{Markov Projection $\calM$.} Although the reciprocal projection $\calR$ in~\eqref{eq:reciprocal projection prelim} preserves end point marginals $(\rho_0, \rho_T)$, its sampling process in~\eqref{eq:reciprocal sampling} requires both $(\bX_0, \bX_T)$, making it non-Markovian and thus ill-suited for generative modeling aimed at sampling from $\rho_T$ without knowing $\bX_T$. The Markov projection $\calM$ resolves this by projecting $\Pi := \calR(\bbP, \calT)$ into a family of Markov process while ensuring $\bbP^{\star} = \Pi_t$ for all $t \in [0, T]$. Again, when $\bbQ$ is chosen as a Brownian motion $\ie d\bX_t = \sigma d\bW_t$, the Markov projection of $\Pi$, $\bbP^{\star} = \calM (\Pi, \calT)$, is induced by following SDEs:
\[
    & d\bX^{\star}_t = \sigma v^{\star}(t, \bX^{\star}_t)dt + \sigma d\bW_t, \quad \bX^{\star}_0 \sim \Pi_0, \\
    & \text{where} \quad v^{\star}(t, \bx) = \tfrac{1}{T-t} \left(\bbE_{\bbQ_{T|t}} \left[\bX_T | \bX_t = \bx \right] - \bx \right). 
\]
Intuitively, the term $\bbE_{\bbQ_{T|t}} \left[\bX_T | \bX_t = \bx \right]$ can be understood as a prediction of the target state $\bX^{\star}_t$ (most probable pair for the given $\bX^{\star}_0$). Recent matching based generative model~\citep{lipman2023flow, peluchetti2022nondenoising} tackles the approximation $\bX^{\star}_T := \bbE_{\bbQ_{T|t}} \left[\bX_T | \bX_t = \bx \right] \approx v^{\theta}(t, \bx)$ by learning a parameterized drift function. This learned drift guides the evolution of $\bX^{\star}_t$ such that its terminal state aligns with the optimal target, by regressing the drift against a target drift derived from samples of $(\bX_0, \bX_T)$ under the reference conditional bridge measure $\bbQ_{|0, T}$.

Building upon the projections $\calR$ and $\calM$, Schrödinger Bridge Matching (SBM) methods~\citep{shi2024diffusion, peluchetti2023diffusion} refines the path measure through an alternating iteraive procedure:
\[\label{eq:IMF objective}
    \bbP^{(2n+1)} := \calM(\bbP^{(2n)}, \calT), \; \bbP^{(2n+2)} := \calR(\bbP^{(2n+1)}, \calT).
\]
Initialized with $\bbP^{(0)} = \bbP^{(0)}_{\calT} \bbQ_{|0, T}$,  utilizing $\bbP^{(0)}_{\calT}$ is independent coupling of $\rho_0$ and $\rho_T$ along with the reference conditional bridge measure $\bbQ_{|\calT}$. Please refer to~\citep{shi2024diffusion, peluchetti2023diffusion} for more details.

\section{Multi-Marginal Iterative Markovian Fitting}\label{sec:Multi-Marginal Iterative Markovian Fitting}
Dynamic SB methods, as discussed in~\Cref{sec:Preliminaries}, have traditionally focused on problems defined by two endpoint marginal distributions, $(\rho_0, \rho_T)$. However, in real-world applications, particularly in fields like developmental biology (e.g., scRNA-seq studies of cellular differentiation), systems are often observed through snapshots at multiple intermediate time points, not just at the beginning and end of a process. This prevalence of multi-stage data highlights a critical limitation of standard SB approaches. While the theoretical extension of SB methods to handle multiple marginals has been explored~\citep{baradat2020minimizing, mohamed2021schrodinger}, the development of robust and scalable computational methods for these multi-marginal settings has lagged. Recently, methods with IPF-type objectives have been derived for multi-marginal cases~\citep{chen2023deep,shen2024multi}. However, challenges persist in ensuring global dynamic consistency across all intervals, maintaining computational tractability as the number of marginals increases.

In this section, we extends the SBM framework$-$conventionally applied to problems with two endpoint marginals $(\rho_0, \rho_T)$$-$to handle cases involving $k+1$ multiple snapshots $(\rho_0, \rho_{t_1}, \cdots, \rho_T)$ on discrete time stamps $\calT = \{t_0,  t_1, \cdots, t_k\}$ where $0 = t_0 < t_1 < \cdots  < t_k =T$\footnote{Our framework accommodates arbitrary time intervals between successive time stamps.}. Similar to~\ref{eq:SBP}, the dynamic multi-marginal Schrödinger Bridge problem can be formally defined as~\citep{chen2019multi} the entropy minimization problem:
\graybox{
\[\tag{\ccolor{\texttt{mSBP}}}\label{eq:MSBP}
    \min_{\bbP \in \calP_{[0, T]}}  \KL(\bbP | \bbQ), \quad \text{subject to } \quad \bbP_t \sim \rho_t, \quad \forall t \in \calT.
\]
}

To find a most probable path $\bbP^{\texttt{mSBP}}$, the solution of~\ref{eq:MSBP} under multiple constraints, we will generalize the principles of SBM in~\Cref{sec:IMF SB} to the multi-marginal cases in~\Cref{sec:Multi-Marginal Generalization of SBM}. The extension of dynamic SB optimality~\citep{pavon1991free, leonard2013survey} and the associated stochastic optimal control problem~\citep{Pra1991ASC} to multi-marginal settings is presented in Appendix~\ref{sec:Multi-Marginal SB Optimality}.

\subsection{Multi-Marginal Projection Operators}\label{sec:Multi-Marginal Generalization of SBM}
To develop multi-marginal extension of SBM, we investigate how the IMF framework can be adapted to scenarios with multiple snapshots ($\ie$ where the set of time points $\calT$ has cardinality $|\calT| > 2$). This adaptation necessitates extending the fundamental building blocks of SBM—specifically, the reciprocal projection $\calR$ and the Markov projection $\calM$—to handle multiple marginal constraints. 
\paragraph{Multi-Marginal Reciprocal Projection $\calR^{\texttt{mm}}$.} First, we state and prove a proposition that characterizes the reciprocal structure of conditional path measures. In particular, we focus on a mixture of bridges $\Pi = \Pi_{\calT} \bbQ_{|\calT} \in \bbP_{[0, T]}$ constrained by the marginals at multiple timestamps in $\calT$.  

\begin{restatable}[Reciprocal Property]{prop}{propone}\label{proposition:reciprocal process}
For any $\bx_{\calT} := (\bx_0, \bx_{t_1}, \cdots, \bx_{T}) \in \bbR^{d \times (k+1)}$ and $t \in [t_{i-1}, t_i)$, the marginal distribution of $\bbQ_{|\calT}(\cdot | \bx_{\calT})$ at $t$ satisfies:
\[
\bbQ_{|\calT}(\bx_t | \bx_{\calT}) = \bbQ_{|t_{i-1}, t_i}(\bx_t | \bx_{t_i}, \bx_{t_{i-1}}).
\]
Therefore, for any $\bbP \in \calP_{[0, T]}$ the reciprocal projection $\calR^{\texttt{mm}}(\bbP, \calT)$ admits the following factorization:
\[\label{eq:pi factorization}
    \calR^{\texttt{mm}}(\bbP, \calT) = \bbP_{\calT} \bbQ_{|\calT} = \bbP_{t_0, \cdots, t_k} \bbQ_{|t_0, \cdots, t_k} = \bbP_{t_0, \cdots, t_k}\textstyle{\prod_{i=1}^k} \bbQ_{|t_{i-1}, t_i}, \quad \bbP\text{-a.e.}
\]
\end{restatable}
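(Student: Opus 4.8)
The plan is to reduce the entire statement to the Markov property of the reference measure $\bbQ$. Since $\bbQ$ is the law of the diffusion in \eqref{eq:reference measure}, it is a Markov process and admits transition densities $q_{s\to u}(\cdot\,|\,\cdot)$; every factorization below is ultimately a statement about these kernels. I would first record the finite-dimensional Markov factorization: for a fixed $t \in [t_{i-1}, t_i)$, the joint density of $(\bX_{t_0}, \dots, \bX_{t_k}, \bX_t)$ under $\bbQ$ splits into a product of consecutive transition densities, in which the only factors involving $\bx_t$ are $q_{t_{i-1}\to t}(\bx_t\,|\,\bx_{t_{i-1}})$ and $q_{t\to t_i}(\bx_{t_i}\,|\,\bx_t)$.

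Next I would obtain the single-time conditional by dividing the joint density by the density of $\bx_{\calT}$. Every factor not involving $\bx_t$ cancels, and the transition $q_{t_{i-1}\to t_i}(\bx_{t_i}\,|\,\bx_{t_{i-1}})$ reappears in the denominator via Chapman--Kolmogorov, leaving
\[
\bbQ_{|\calT}(\bx_t \,|\, \bx_{\calT}) = \frac{q_{t_{i-1}\to t}(\bx_t \,|\, \bx_{t_{i-1}})\, q_{t\to t_i}(\bx_{t_i} \,|\, \bx_t)}{q_{t_{i-1}\to t_i}(\bx_{t_i} \,|\, \bx_{t_{i-1}})}.
\]
Performing the identical computation for the two-point conditioning on $\{t_{i-1}, t_i\}$ yields exactly the same right-hand side, so $\bbQ_{|\calT}(\bx_t \,|\, \bx_{\calT}) = \bbQ_{|t_{i-1}, t_i}(\bx_t \,|\, \bx_{t_i}, \bx_{t_{i-1}})$, which is the displayed identity. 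This is the conceptual crux: conditioning on the full snapshot vector collapses to conditioning on the two bracketing values.

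The upgrade from this single-time marginal to the full path-measure factorization in \eqref{eq:pi factorization} is where the real work sits, and I would phrase it through conditional independence. The claim is that, given $\bX_{\calT} = \bx_{\calT}$, the path segments $(\bX_s)_{s\in[t_{i-1},t_i]}$ are mutually independent, each distributed as the bridge $\bbQ_{|t_{i-1},t_i}(\cdot\,|\,\bx_{t_i},\bx_{t_{i-1}})$. I would establish this by extending the argument above from a single interior time $t$ to a finite collection of interior times spread across all intervals: the Markov factorization again separates into per-interval blocks that share no $\bx$ beyond the snapshot endpoints, so the finite-dimensional conditionals factor across intervals. Passing to the generated $\sigma$-algebras, via a monotone-class argument refining the time grid, then yields $\bbQ_{|\calT} = \prod_{i=1}^k \bbQ_{|t_{i-1}, t_i}$, $\bbP$-a.e.

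Finally, the factorization of $\calR^{\texttt{mm}}$ follows from its definition $\calR^{\texttt{mm}}(\bbP, \calT) = \bbP_{\calT}\bbQ_{|\calT}$ exactly as in the two-marginal case: substituting the product decomposition of $\bbQ_{|\calT}$ produces the stated chain of equalities. The main obstacle is the rigorous conditional-independence step, namely ensuring the per-interval bridges genuinely decouple given the snapshots, which hinges on applying the Markov property simultaneously at all interior times rather than one at a time; everything else is bookkeeping with transition kernels.
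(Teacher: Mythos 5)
Your proposal is correct, and it rests on the same underlying fact as the paper's proof --- the Markov property of $\bbQ$ forces the path to decouple into per-interval blocks once the snapshot values $\bx_{\calT}$ are fixed --- but you traverse the logic in the opposite direction. The paper \emph{starts} from the strong, path-level statement: it imports from Baradat--L\'eonard (their Definition~2.2 of a Markov measure) the factorization $\bbQ(\cap_{i=1}^k A_i \mid \bX_{\calT}) = \prod_{i=1}^k \bbQ_{|t_{i-1},t_i}(A_i \mid \bX_{t_{i-1}},\bX_{t_i})$ for events $A_i \in \sigma(\bX_{[t_{i-1},t_i]})$, and then obtains the single-time identity as an immediate corollary by taking $A_j=\Omega$ for $j\neq i$ and $A_i=\{\bX_t\in B\}$; the product form of $\calR^{\texttt{mm}}$ then follows by disintegration. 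You instead \emph{derive} the single-time identity from scratch via transition densities and Chapman--Kolmogorov cancellation, and then do the real work of upgrading to the path-level conditional independence through finite-dimensional marginals at multiple interior times plus a monotone-class argument. Your route is more self-contained and makes explicit the step the paper outsources to a citation --- which is a genuine virtue, since that segment-level decoupling is precisely the crux you correctly flag --- at the cost of being longer; the paper's route is shorter but leans on the cited characterization of Markov measures doing all the heavy lifting. Both are valid, and your density-level computation of the bridge conditional $q_{t_{i-1}\to t}\,q_{t\to t_i}/q_{t_{i-1}\to t_i}$ matches what the paper's event-based specialization produces.
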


A key implication of the reciprocal property, detailed in~\Cref{proposition:reciprocal process}, is that a mixture of diffusion bridges constrained on $\calT$ factorizes into independent segments over successive time intervals. This factorization simplifies the analysis and simulation of the overall path measure. Since each segment can then be treated as a standard conditional bridge process as in~\eqref{eq:conditioned SDE_prelim}, closed-form sampling, such as in~\eqref{eq:reciprocal sampling}, can be applied independently in parallel to each subinterval $\{t_{i-1}, t_i\}_{i \in [1:k]}$. This tractability is essential for developing an efficient multi-marginal SBM algorithm.

\paragraph{Multi-Marginal Markov Projection $\calM^{\texttt{mm}}$.} With the reciprocal property and factorization in~\eqref{eq:pi factorization}, we show that the Markov projection on multi-marginal case can be constructed by similar fashion.

\begin{restatable}[Multi-Marginal Markovian Projection]{prop}{proptwo}\label{proposition:Multi-Marginal Markovian Projection} 
Let $\Pi \in \calP_{[0, T]}$ admit factorzation in~\eqref{eq:pi factorization}. The multi-marginal Markov projection of $\Pi$, $\bbP^{\star} := \calM^{\texttt{mm}}(\Pi, \calT) \in \calP_{[0, T]}$, is associated with the SDE:
\[\label{eq:Multi-Marginal Markovian Projection}
    & d\bX^{\star}_t = \left[ f_t(\bX^{\star}_t) + \sigma v^{\star}(t, \bX^{\star}_t) \right]dt + \sigma d\bW_t, \quad \bX^{\star}_0 \sim \Pi_0, \\
    &\text{where} \; v^{\star}(t, \bx) = \textstyle{\sum_{i=1}^k \mathbf{1}_{[t_{i-1}, t_i)}}\bbE_{\Pi_{t_i|t}}\left[\nabla \log \bbQ_{t_i|t}(\bX_{t_i} | \bX_t) | \bX_t = \bx\right]\label{eq:optimal control}.
\vspace{-2mm}
\]
Moreover, $v^{\star}$ satisfies the Fokker-Planck equation (FPE)~\citep{risken1996fokker}:
\[
    \partial_t \rho_t = -\nabla \cdot (v^{\star}_t(\bx) \rho_t(\bx)) + \tfrac{\sigma^2}{2}\Delta \rho_t(\bx) = 0, \quad \rho_{t} = 
    \Pi_{t}, \quad \forall t \in \cal T,
\]
where $p_t$ is marginal density of $\Pi_t$. In other words, $\bbP^{\star}_t = \Pi_t$ for all $t \in [0, T]$.
d\end{restatable}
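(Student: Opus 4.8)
The plan is to identify $\calM^{\texttt{mm}}(\Pi,\calT)$ with the \emph{mimicking diffusion} of the reciprocal process $\Pi$, and then read off both the drift formula in~\eqref{eq:optimal control} and the marginal-preservation claim $\bbP^{\star}_t=\Pi_t$ from that single construction. Recall the characterization already used for the two-marginal Markov projection in \Cref{sec:Preliminaries}: the projection of a path measure onto diffusions with fixed volatility $\sigma$ is the Itô process whose drift at $(t,\bx)$ equals the $\Pi$-conditional expectation of the instantaneous drift of $\Pi$ given $\bX_t=\bx$, and by the classical mimicking (Markovian projection) theorem this process inherits the one-dimensional time marginals of $\Pi$. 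The whole proposition therefore reduces to two tasks: (i) writing the instantaneous drift of $\Pi$ in closed form, and (ii) checking that the resulting marginal flow is the stated FPE solution with the node constraints.

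For step (i) I would localize using \Cref{proposition:reciprocal process}. By the reciprocal property, for $t\in[t_{i-1},t_i)$ the law $\bbQ_{|\calT}(\cdot\mid\bx_{\calT})$ coincides with the single-interval bridge $\bbQ_{|t_{i-1},t_i}(\cdot\mid\bx_{t_{i-1}},\bx_{t_i})$, which is the Doob $h$-transform of the reference SDE~\eqref{eq:reference measure} pinned at $\bx_{t_i}$ and hence, as in~\eqref{eq:conditioned SDE_prelim}, carries drift $f_t(\bx)+\sigma^2\nabla\log\bbQ_{t_i|t}(\bx_{t_i}\mid\bx)$. Since $\Pi=\Pi_{\calT}\bbQ_{|\calT}$ is the mixture of these bridges over $\bx_{\calT}\sim\Pi_{\calT}$ and the conditional drift depends on the endpoints only through the future node $\bx_{t_i}$, conditioning on $\bX_t=\bx$ and integrating out $\bX_{t_i}$ gives the mimicking drift $f_t(\bx)+\sigma^2\,\bbE_{\Pi_{t_i|t}}[\nabla\log\bbQ_{t_i|t}(\bX_{t_i}\mid\bX_t)\mid\bX_t=\bx]$, which matches $f_t+\sigma v^{\star}$ once the $\sigma$-dependent prefactor is absorbed into $v^{\star}$; the indicator sum $\sum_i\mathbf{1}_{[t_{i-1},t_i)}$ simply records that the conditioning node switches from interval to interval.

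For step (ii) I would obtain marginal preservation by combining the mimicking property on each open subinterval with an induction over $i$ to stitch the intervals. Given $\bbP^{\star}_{t_{i-1}}=\Pi_{t_{i-1}}$, the two-marginal theory applied to the single bridge on $[t_{i-1},t_i]$ yields $\bbP^{\star}_t=\Pi_t$ throughout that interval, and in particular $\bbP^{\star}_{t_i}=\Pi_{t_i}=\rho_{t_i}$, which supplies the correct initial condition for the next step; the base case is the stated initialization $\bbP^{\star}_0=\Pi_0$. Equivalently, I would verify that $\rho_t:=\Pi_t$ is a weak solution of $\partial_t\rho_t = -\nabla\cdot(v^{\star}_t\rho_t)+\tfrac{\sigma^2}{2}\Delta\rho_t$ on each open interval, that it is continuous in $t$ across the nodes, and then invoke uniqueness of FPE solutions to conclude $\bbP^{\star}_t=\Pi_t$ for all $t\in[0,T]$.

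The main obstacle is the behavior at the interface times $t_i$: the drift $v^{\star}$ is generically discontinuous there, since the conditioning variable jumps from $\bx_{t_i}$ to $\bx_{t_{i+1}}$, so I must argue that the piecewise construction defines a single global weak solution of the FPE rather than $k$ unrelated pieces. The decisive facts are that the marginal density $\rho_t$ is continuous in $t$ at each node (both one-sided limits equal $\rho_{t_i}$) and that the FPE is first order in time, so a time-discontinuous drift does not break the distributional identity provided the density matches at the nodes — which is exactly what the inductive marginal-matching of step (ii) delivers. The remaining care is technical: measurability and finiteness of the conditional expectations defining $v^{\star}$, absolute continuity making the $h$-transform and the projection characterization licit, and integrability ensuring the mimicking theorem applies, all of which hold under the standing absolute-continuity assumptions.
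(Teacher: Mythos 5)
Your proposal is correct and follows essentially the same route as the paper's proof: the paper likewise localizes via Proposition~\ref{proposition:reciprocal process}, derives $v^{\star}$ as the $\Pi_{t_i|t}$-average of the conditional bridge drift by differentiating the mixture density $\rho_t=\int\bbQ_{t|\calT_i}\,d\Pi_{\calT}$ and applying the conditional Fokker--Planck equation (i.e., the mimicking argument carried out by hand), and then glues the subintervals by observing that the local projections share their marginals at the boundary times. The only cosmetic difference is that you invoke the mimicking (Markovian projection) theorem as a black box where the paper writes out the $\partial_t\rho_t$ computation explicitly.
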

As established in~\Cref{proposition:Multi-Marginal Markovian Projection}, constructing a global diffusion process via~\eqref{eq:Multi-Marginal Markovian Projection} with the optimal control $v^{\star}$~\eqref{eq:optimal control}) yields a multi-marginal Markov projection $\bX^{\star}_{[0, T]}$ that is continuous over the entire time interval $[0, T]$. The continuity arises because the local Markov projections, $\bX^{\star}_{[t_{i-1}, t_i]}$, on each sub-interval are derived from factorized conditional bridge $\bbQ_{|t_{i-1}, t_i}$ in~\eqref{eq:pi factorization}. These bridges are anchored by identical marginal distributions at there shared boundaries; for instance, both $\bX^{\star}_{[t_{i-1}, t_i]}$  and $\bX^{\star}_{[t_{i}, t_{i+1}]}$ is guaranteed to match the marginal distribution $\rho_{t_i}$ at time $t_i$. Consequently, these local diffusion processes connect seamlessly at adjacent timestamps, resulting in a continuous and well-defined path for  $\bX^{\star}_{[0, T]}$. The well-defined nature of the global path, in conjunction with the projections $\calR^{\texttt{mm}}$ and $\calM^{\texttt{mm}}$,  is fundamental to successfully applying the SBM framework to the~\ref{eq:MSBP}. Finally, the uniquness condition for standard SB~\citep[Proposition 5]{shi2024diffusion} can also be extended to multi-marginal case.
\begin{restatable}[Uniqueness]{prop}{propthree}\label{proposition:uniquness}
    Let $\bbP^{\star}$ be a Markov measure which is reciprocal class of $\bbQ$ satisfying $\bbP^{\star}_t = \rho_t$ for all $t \in \calT$. Then, $\bbP^{\star}$ is unique solution $\bbP^{\texttt{mSBP}}$ of the~\ref{eq:MSBP}.
\end{restatable}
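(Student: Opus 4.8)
The plan is to collapse the dynamic problem~\ref{eq:MSBP} onto a finite-dimensional (\emph{static}) entropic problem for the node coupling $\bbP_{\calT}$, settle uniqueness there by strict convexity, and then use the Markov hypothesis to certify that the given $\bbP^{\star}$ attains the static optimum.

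First I would disintegrate the relative entropy along the split into the constraint-time coupling and the conditional bridge. For any feasible $\bbP$ (i.e. $\bbP_t = \rho_t$ for $t \in \calT$) the chain rule for $\KL$ gives
\begin{equation*}
\KL(\bbP \,|\, \bbQ) = \KL(\bbP_{\calT} \,|\, \bbQ_{\calT}) + \int \KL\!\big(\bbP_{|\calT}(\cdot|\bx_{\calT}) \,\big|\, \bbQ_{|\calT}(\cdot|\bx_{\calT})\big)\, d\bbP_{\calT}(\bx_{\calT}),
\end{equation*}
where the factorization $\bbQ_{|\calT} = \prod_{i} \bbQ_{|t_{i-1}, t_i}$ from~\Cref{proposition:reciprocal process} makes the conditional piece well defined and separable over segments. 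The second term is nonnegative and vanishes \emph{iff} $\bbP$ is reciprocal, i.e. $\bbP = \bbP_{\calT}\bbQ_{|\calT} = \calR^{\texttt{mm}}(\bbP, \calT)$. Hence the optimal value of~\ref{eq:MSBP} equals $\min_{\pi} \KL(\pi \,|\, \bbQ_{\calT})$ over couplings $\pi$ on $\bbR^{d \times (k+1)}$ with marginals $\rho_{t_0}, \dots, \rho_{t_k}$, and every minimizer lies in the reciprocal class of $\bbQ$.

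Next I would settle uniqueness of this static optimizer: the set of couplings with the prescribed marginals is convex, and $\pi \mapsto \KL(\pi \,|\, \bbQ_{\calT})$ is strictly convex where finite, so the minimizer $\pi^{\star}$ (existing by feasibility plus lower semicontinuity) is unique, and therefore so is the dynamic minimizer $\pi^{\star}\bbQ_{|\calT}$. It then remains to identify $\pi^{\star}$ with the coupling $\bbP^{\star}_{\calT}$ of the hypothesized measure. Since $\bbP^{\star}$ is reciprocal we already have $\KL(\bbP^{\star}|\bbQ) = \KL(\bbP^{\star}_{\calT} \,|\, \bbQ_{\calT})$, so it suffices to show $\bbP^{\star}_{\calT}$ is optimal. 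The decisive structural fact is that a measure which is simultaneously Markov and in the reciprocal class of the Markov reference $\bbQ$ has a density against $\bbQ_{\calT}$ that factorizes into \emph{single-site} potentials,
\begin{equation*}
\log \frac{d\bbP^{\star}_{\calT}}{d\bbQ_{\calT}}(\bx_{t_0}, \dots, \bx_{t_k}) = \textstyle\sum_{i=0}^{k} g_i(\bx_{t_i}).
\end{equation*}
Granting this, a Pythagorean identity finishes the argument: for any feasible coupling $\pi$, using that $\pi$ and $\bbP^{\star}_{\calT}$ share the marginals $\rho_{t_i}$,
\begin{equation*}
\KL(\pi\,|\,\bbQ_{\calT}) - \KL(\pi\,|\,\bbP^{\star}_{\calT}) = \int \textstyle\sum_{i} g_i(\bx_{t_i})\, d\pi = \textstyle\sum_{i}\int g_i\, d\rho_{t_i},
\end{equation*}
which is independent of $\pi$; evaluating at $\pi = \bbP^{\star}_{\calT}$ shows this constant equals $\KL(\bbP^{\star}_{\calT}|\bbQ_{\calT})$, whence $\KL(\pi|\bbQ_{\calT}) = \KL(\pi|\bbP^{\star}_{\calT}) + \KL(\bbP^{\star}_{\calT}|\bbQ_{\calT}) \geq \KL(\bbP^{\star}_{\calT}|\bbQ_{\calT})$, with equality iff $\pi = \bbP^{\star}_{\calT}$. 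Thus $\bbP^{\star}_{\calT} = \pi^{\star}$ and $\bbP^{\star} = \pi^{\star}\bbQ_{|\calT}$ is the unique solution $\bbP^{\texttt{mSBP}}$.

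The hard part is the single-site factorization invoked above, i.e. the continuous-time characterization that ``Markov $+$ reciprocal class of $\bbQ$'' forces the Gibbs product structure of the node coupling. I expect to establish it from reciprocal-process theory (Jamison/L\'eonard): on each subinterval $\bbP^{\star}$ coincides with a $\bbQ$-bridge, and imposing the Markov property at the interior nodes $t_1, \dots, t_{k-1}$ forces past and future to be conditionally independent given the node value, which collapses the nearest-neighbor couplings of the Markov chain $\bbP^{\star}_{\calT}$ into a product of single-site factors relative to $\bbQ_{\calT}$. Beyond this, the remaining work is measure-theoretic bookkeeping: justifying the disintegration, the absolute continuity $\bbP^{\star} \ll \bbQ$, and existence for the static problem; these are routine but must be stated to make the Pythagorean identity and the strict-convexity uniqueness rigorous.
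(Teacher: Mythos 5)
Your proposal is correct in outline, but it reaches the conclusion by a more self-contained route than the paper. The paper's proof is essentially an assembly of cited characterization theorems: it invokes \citep[Theorem 2.10]{baradat2020minimizing} for ``product-form density $\Rightarrow$ Markov and reciprocal,'' \citep[Theorem 4.5]{baradat2020minimizing} and \citep[Theorem 2.6]{mohamed2021schrodinger} for ``the unique minimizer has density $\prod_{i} \Psi_{t_i}(\bx_{t_i})$ against $\bbQ$,'' and then matches the two characterizations; it never explicitly reduces to a static problem or argues optimality from first principles. You instead (i) reduce \ref{eq:MSBP} to the static entropic problem over node couplings via the KL disintegration (which the paper does carry out, but in Appendix~\ref{sec:Multi-Marginal SB Optimality} for the control formulation, not inside this proof), (ii) get uniqueness from strict convexity, and (iii) certify optimality of $\bbP^{\star}_{\calT}$ by a Pythagorean/calibration identity. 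What your approach buys is transparency: the only nontrivial input is the single-site factorization $\log\tfrac{d\bbP^{\star}_{\calT}}{d\bbQ_{\calT}} = \sum_i g_i(\bx_{t_i})$, and everything else is elementary convexity. What it costs is that this factorization lemma --- which you correctly flag as the hard part and only sketch --- is precisely the content of the external results the paper leans on; your plan to derive it by applying Jamison's two-marginal characterization on each segment $[t_{i-1},t_i]$ and then using the Markov-chain structure of the skeleton $\bbP^{\star}_{\calT}$ is sound, but until it is written out your proof has the same load-bearing citation hidden inside it. Two small points of rigor to attend to: the Pythagorean identity requires $\sum_i \int g_i\, d\rho_{t_i}$ to be well defined (not $\infty - \infty$) and $\pi \ll \bbP^{\star}_{\calT}$ for feasible $\pi$, which follows if the potentials are finite $\bbQ_{\calT}$-a.e.; and the whole argument presupposes $\KL(\bbP^{\star}|\bbQ) < \infty$, which should be stated as a standing assumption.
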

Building on the projection operators $\calR^{\texttt{mm}}, \calM^{\texttt{mm}}$ with the uniquness result of~\Cref{proposition:uniquness}, we can apply the iterative algorithm used in SBM algorithm~\citep[Algorithm 1]{shi2024diffusion} to the multi-marginal setting: 
\[\label{eq:IPF objective_multi-marginal}
    \bbP^{(2n+1)} := \calM^{\texttt{mm}}(\bbP^{(2n)}, \calT), \; \bbP^{(2n+2)} := \calR^{\texttt{mm}}(\bbP^{(2n+1)}, \calT), \quad |\calT| >2.
\]
The convergence guarantees proved for the iteration apply equally well to the multi-marginal case.
\begin{restatable}[Convergence]{prop}{propfour}\label{proposition:convergence}  $\bbP^{(n)} = \bbP^{\texttt{mSBP}}$ of~\ref{eq:MSBP} as $n \uparrow \infty$ with iterative procedure in~\eqref{eq:IPF objective_multi-marginal}.
\end{restatable}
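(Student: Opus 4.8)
The plan is to establish convergence by reducing the multi-marginal iteration in \eqref{eq:IPF objective_multi-marginal} to the structure of the standard IMF convergence argument, exploiting the factorization already proven in \Cref{proposition:reciprocal process}. The key observation is that both operators $\calR^{\texttt{mm}}$ and $\calM^{\texttt{mm}}$ are information projections with respect to $\KL(\cdot \,|\, \bbQ)$: the reciprocal projection $\calR^{\texttt{mm}}(\bbP, \calT) = \bbP_{\calT}\bbQ_{|\calT}$ is the projection onto the reciprocal class $\calR(\bbQ)$ of path measures sharing $\bbQ$'s bridge structure on each subinterval, while $\calM^{\texttt{mm}}$ is the projection onto the Markov class $\calM$. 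I would first show that each projection is a genuine $I$-projection, so that the Pythagorean-type identity
\[
\KL(\bbP \,|\, \bbQ) = \KL(\bbP \,|\, \Pi) + \KL(\Pi \,|\, \bbQ)
\]
holds when $\Pi$ is the projection of $\bbP$ onto the appropriate class, using that $\bbQ$ itself lies in both classes (it is Markov and trivially reciprocal to itself).

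The main structural step is to show that the sequence of KL divergences $\KL(\bbP^{(n)} \,|\, \bbQ)$ is monotonically nonincreasing and that each projection decreases it by exactly the ``cross'' divergence to the previous iterate. Concretely, since $\calM^{\texttt{mm}}$ fixes all marginals on $\calT$ (by the Fokker--Planck characterization in \Cref{proposition:Multi-Marginal Markovian Projection}, $\bbP^{\star}_t = \Pi_t$ for all $t$) and $\calR^{\texttt{mm}}$ fixes the joint coupling $\bbP_{\calT}$, both iterates remain feasible for the constraints $\bbP_t \sim \rho_t$, $t \in \calT$ along the whole sequence. I would then invoke the standard alternating-projection / De Bruijn-type monotonicity: the divergences $\KL(\bbP^{(n)}\,|\,\bbP^{(n+1)})$ are summable because they telescope against the monotone bounded sequence $\KL(\bbP^{(n)}\,|\,\bbQ)$, forcing $\KL(\bbP^{(n)}\,|\,\bbP^{(n+1)}) \to 0$. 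This pins the limit to the intersection $\calM \cap \calR(\bbQ)$ of the Markov and reciprocal classes, subject to the marginal constraints.

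The final step identifies this limit with $\bbP^{\texttt{mSBP}}$. A cluster point $\bbP^{\infty}$ of the sequence is, by closedness of the two classes and $\KL(\bbP^{(n)}\,|\,\bbP^{(n+1)}) \to 0$, simultaneously Markov and in the reciprocal class of $\bbQ$, and it satisfies $\bbP^{\infty}_t = \rho_t$ for all $t \in \calT$ because the constraints are preserved at every step. By \Cref{proposition:uniquness}, any measure that is Markov, reciprocal to $\bbQ$, and matches the prescribed marginals on $\calT$ equals the unique solution $\bbP^{\texttt{mSBP}}$; hence every cluster point coincides, and the whole sequence converges. I expect the main obstacle to be the rigorous justification that the factorized multi-marginal projections are honest $I$-projections onto closed convex (in the geometry of $\KL$) sets so that the Pythagorean identity and the resulting monotonicity transfer verbatim from the two-marginal case; once the factorization of \Cref{proposition:reciprocal process} localizes everything to independent subintervals, this should follow by applying the single-interval SBM result of \citet{shi2024diffusion} on each $[t_{i-1}, t_i]$ and gluing via the shared boundary marginals, but ensuring the gluing preserves both global Markovianity and the reciprocal structure simultaneously is the delicate point.
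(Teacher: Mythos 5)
Your argument is correct in outline and shares the paper's skeleton --- a Pythagorean law for each projection, telescoping of the decrements against a bounded monotone sequence, and \Cref{proposition:uniquness} to identify the limit --- but it is anchored differently: you monitor $\KL(\bbP^{(n)}\,|\,\bbQ)$, the divergence to the reference, whereas the paper monitors $\KL(\bbP^{(n)}\,|\,\bbP^{\texttt{mSBP}})$, the divergence to the solution. Both are legitimate Lyapunov functions here: your identity $\KL(\bbP\,|\,\bbQ)=\KL(\bbP\,|\,\Pi)+\KL(\Pi\,|\,\bbQ)$ for $\Pi=\calR^{\texttt{mm}}(\bbP,\calT)$ follows from the disintegration of the relative entropy over $\calT$ exactly as in the paper's treatment of the reciprocal step, and the corresponding identity for $\calM^{\texttt{mm}}$ is the paper's Pythagorean law specialized to $\bbP=\bbQ$ (which is admissible since $\bbQ$ is Markov). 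The paper's anchor lets it conclude by directly importing the monotone-decrease-to-the-target argument of \citet[Appendix C.6]{shi2024diffusion}; yours requires the extra cluster-point and closedness step you describe, but has the advantage that $\KL(\cdot\,|\,\bbQ)$ is finite and explicit at initialization without reference to the unknown solution. One conceptual correction: you propose to justify the Pythagorean identities by exhibiting both projections as $I$-projections onto closed \emph{convex} sets. This works for the reciprocal class, which is convex (it is parametrized linearly by the coupling $\bbP_{\calT}$), but the Markov class of path measures is not convex --- mixtures of Markov processes are generally non-Markov --- so generic information-projection theory does not yield the identity for $\calM^{\texttt{mm}}$. That identity is a bespoke property of the Markovian projection, established via its explicit SDE characterization and a Girsanov computation, which the paper imports from \citet{peluchetti2023diffusion}; you would need to import or reprove the same result, and your plan of localizing to independent single-interval projections via \Cref{proposition:reciprocal process} and gluing at the shared boundary marginals as in \Cref{proposition:Multi-Marginal Markovian Projection} is indeed the right way to do so.
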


\subsection{Practical Implementation. } In practice, at each iteration $n$ of~\eqref{eq:IPF objective_multi-marginal} we approximate the optimal control $v^{\star}$ from~\eqref{eq:optimal control} by a neural network $v_{\theta}$. By Girsanov theorem, $\theta$ are chosen to minimize the following training objective function:
\[\label{eq:forward objective}
    \calL(\theta, \calT, \Pi_{\calT}) =  \textstyle{\int_{0}^{T}} \bbE_{\Pi_{t, \calT}} [ || \sigma \nabla \log \bbQ_{\beta_{\calT}(t) | t}(\bX_{\beta_{\calT}(t)} | \bX_t) - v_{\theta}(t, \bX_t) ||^2  dt ],
\]
where $\beta_{\calT}(t) = \min_{u}\{u > t |u \in \calT \} \in [0, T]$ is the most recent time point in $\calT$ after time $t$. With this notation, the SBM can be generalized to the case of multi-marginal constraints. For example, when  $\calT = \{0, T\}$ then~\eqref{eq:forward objective} reduces to the objective function described in~\citep{shi2024diffusion}. 

The learned Markov control $v_{\theta^{\star}}(t, \bx_t)$ then ensures $\bbP^{\theta^{\star}}_t = \Pi_t$ for all $t \in [0, T]$. 
Moreover, prior SBM algorithms interleave forward and backward‑time Markov projections to re‑anchor the terminal distribution and prevent bias between $\bbP^{(n)}_{T}$ and $\Pi_T$ accumulate for each $n \in \bbN$. In the multi‑marginal setting, we again build the backward‑time Markov projection as in Proposition~\ref{proposition:Multi-Marginal Markovian Projection} by \textit{gluing} the local bridge reversals, so that $\bbP^{\star}$ is governed by both SDEs~\eqref{eq:Multi-Marginal Markovian Projection} and the corresponding backward dynamics:
\[\label{eq:backward Multi-Marginal Markovian Projection}
    & d\bY^{\star}_t = \left[-f_{T-t}(\bY^{\star}_t) + \sigma u^{\star}(t, \bY^{\star}_t) \right]dt + \sigma d\bW_t, \quad \bY^{\star}_0 \sim \Pi_T, \\
    &\text{where} \; u^{\star}(t, \by) = \textstyle{\sum_{i=1}^{k}}  \mb{1}_{(t_{i-1}, t_i]}(t) \bbE_{\Pi_{t|t_{i-1}}}\left[\nabla \log \bbQ_{t|t_{i-1}}(\bY_{t} | \bY_{t_{i-1}}) | \bY_t = \by\right]\label{eq:backward optimal control},
\]
where the backward optimal control $u^{\star}$ in~\eqref{eq:backward optimal control} can be approximated with neural network $u_{\phi}$ where $\phi$ is chosen to minimize the following training objective function with $\gamma_{\calT}(t) = \max_{u}\{u < t |u \in \calT \} \in [0,T]$:
\[\label{eq:backward objective}
    \calL(\phi, \calT, \Pi_{\calT}) = \int_0^T \bbE_{\Pi_{t, \calT}} [ || \sigma \nabla \log \bbQ_{t|\gamma_{\calT}(t)}(\bY_{t} | \bY_{\gamma_{\calT}(t)}) - u_{\phi}(t, \bY_t) ||^2  dt ].
\]

\begin{figure}[t]
\vskip -0.15in
\begin{minipage}[t]{0.52\textwidth}
\begin{algorithm}[H]
\small
     \caption{\small Training of MSBM}
\label{alg:MSBM}
\begin{algorithmic}[1]
    {\STATE \textbf{Input:} Snapshots $\{\rho_t\}_{t \in \calT}$, bridge $\bbQ_{|\calT}$, $N \in \bbN$
    \STATE Let $\{\bbP^{(0)}_{\calT_i}\}_{i \in [1:k]}$ joint coupling of $
    \{\rho_{t \in \calT_i}\}_{i \in [1:k]}$.
    \FOR{$n \in \{0, \dots, N-1\}$}
    \STATE{\textbf{for} $i \in \{1, \dots, k-1\}$ \textbf{do in parallel}}
        \STATE \hspace{2mm} Let $\Pi_{\calT_i}^{(2n)} = \bbP^{(2n)}_{\calT_i} $
        \STATE \hspace{2mm} Estimate $\calL(\phi, \calT_i, \Pi^{(2n)}_{\calT_i}, \bbQ_{|\calT_i})$
    \STATE Estimate $\tilde\calL(\phi) = \sum_{i=1}^k \calL(\phi, \calT_i, \Pi^{(2n)}_{\calT_i}, \bbQ_{|\calT_i})$
      \STATE $u_{\phi^{\star}} = \argmin_{\phi} 
      \textstyle{\sum_{i=1}^k} \tilde\calL(\phi)$
      \STATE Simulate local backward SBs $\{\bbP^{i, (2n+1)}\}_{i \in [1:k]}$
    \STATE{\textbf{for} $i \in \{1, \dots, k-1\}$ \textbf{do in parallel}}
      \STATE \hspace{2mm} Let $\Pi_{\calT_i}^{(2n+1)} = \bbP^{(2n+1)}_{\calT_i} $ 
        \STATE \hspace{2mm} Estimate $\calL(\theta, \calT_i, \Pi^{(2n+1)}_{\calT_i}, \bbQ_{|\calT_i})$
        \STATE Estimate $\tilde\calL(\theta) = \sum_{i=1}^k \calL(\theta, \calT_i, \Pi^{(2n+1)}_{\calT_i}, \bbQ_{|\calT_i})$
      \STATE $v_{\theta^{\star}} = \argmin_{\theta} 
      \textstyle{\sum_{i=1}^k} \calL(\theta, \calT_i, \Pi^{(2n+1)}_{\calT_i})$
      \STATE Simulate local forward SBs $\{\bbP_{[t_{i-1}, t_i]}^{i, (2n+2)}\}$
    \ENDFOR
    \STATE \textbf{Output:} $v_{\theta}^{\star}, u_{\phi}^{\star}$
    }
\end{algorithmic}
\end{algorithm}
\vskip -0.25in
\end{minipage}
\hfill
\begin{minipage}[t]{0.45\textwidth}
\begin{algorithm}[H]
  \small
     \caption{\small Simulation of MSBM (forward)}
     \label{alg:train2}
  \begin{algorithmic}
  \STATE \textbf{Input:} Initial $\rho_0$, learned control $v_{\theta^{\star}}$
      \STATE Sample $\bX_0 \sim \rho_0$
      \STATE Simulate forward SDE over $[0, T]$
      \STATE $d\bX^{\star}_t = \left[f_t + \sigma v_{\theta^{\star}}(t, \bX^{\star}_t)\right]dt + \sigma d\bW_t, \quad $
      \STATE \textbf{Output:} Trajectory $\bX^{\star}_{[0, T]}$
  \end{algorithmic}
\end{algorithm}
\vspace{-4mm}
\centering
\includegraphics[width=0.95\textwidth,]{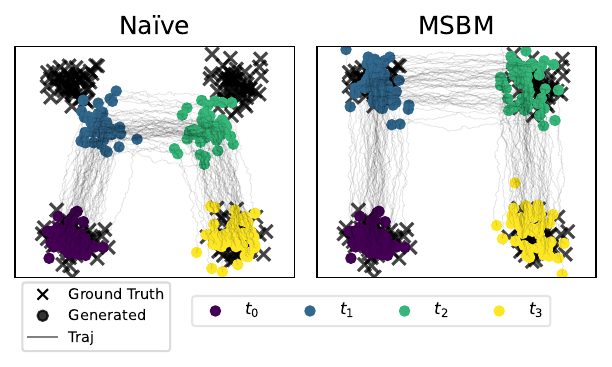}
\vspace*{-2mm}
\captionof{figure}{\textbf{(Left)} The naïve extension fails to model intermediate states
due to the accumulation of errors. \textbf{(Right)} In contrast, MSBM successfully models the ground truth data.}\label{fig:naive_vs_MSBM}
\end{minipage}
\vskip -0.15in
\end{figure}

\section{Multi-Marginal Schrödinger Bridge Matching}

A naïve extension of the standard SBM using, multi-marginal projections $\calR^{\texttt{mm}}$ and $\calM^{\texttt{mm}}$ in Sec~\ref{sec:Multi-Marginal Iterative Markovian Fitting}, might encounters significant limitations which is not presented in the traditional two-endpoint setting. In such an extension, each iteration typically enforces marginal constraints only at the global endpoints $(\rho_0, \rho_T)$. The multi-marginal coupling $\Pi^{(n)}_{\calT}$ at each iteration $n$ of~\eqref{eq:IPF objective_multi-marginal} is then derived by propagating the projected dynamics in~\eqref{eq:Multi-Marginal Markovian Projection} or \eqref{eq:backward Multi-Marginal Markovian Projection} solely from these end points $\rho_0$ or $\rho_T$, respectively.

\paragraph{Error accumulation.} This approach leads to critical issues specific to the multi-marginal context. Firstly, if the learned controls, such as $v^{\star}$ (forward) or $u^{\star}$ (backward), are even slightly inaccurate, significant biases can arise between the inferred intermediate marginals $(\Pi_{t_1}^{(n)}, \cdots \Pi_{t_{k-1}}^{(n)})$ and the target marginals $(\rho_{t_1}, \cdots, \rho_{t_{k-1}})$. Secondly, these discrepancies tend to accumulate iteratively. This accumulation is exacerbated because, beyond an initialization $\Pi^{(0)} = \bbP^{(0)}_{\calT} \bbQ_{|\calT}$ with $\bbP^{(0)}_{\calT}$, independent joint coupling of $\{\rho_{t}\}_{t \in \calT}$, where the joint distribution might be informed by all prescribed data distributions, the subsequent self-refinement process for the dynamics often does not directly incorporate the intermediate data distributions $(\rho_{t_1}, \cdots, \rho_{t_{k-1}})$ into its training objective except $\rho_0$ and $\rho_T$. Without explicit targets for the intermediate marginals guiding each iteration, the inferred paths between $\rho_0$ and $\rho_T$ can \textit{collapse} or drift away from the desired states. Consequently,  precisely satisfying all intermediate constraints becomes increasingly challenging as iterations proceed as shown in~\Cref{fig:naive_vs_MSBM}.

\paragraph{Marginal-scaling Cost} Moreover, for each IMF iteration $n$, simulating Markov dynamics in~\eqref{eq:Multi-Marginal Markovian Projection} or~\eqref{eq:backward Multi-Marginal Markovian Projection} over the entire time interval $[0, T]$ to infer all the intermediate marginals $(\Pi_{t_1}^{(n)}, \cdots \Pi_{t_{k-1}}^{(n)})$. This creates a cost that grows with the number of marginals $K$ can be computational overhead for training.

To address this issues of error accumulation and computational overhead while multi-marginal optimality for~\eqref{eq:MSBP} satisfied, we propose a simple modifies that involves constructing local SBs on each interval $[t_{i-1}, t_i]$ and then seamlessly \emph{gluing} them together. Instead of propagating dynamics from the global endpoints $\rho_0$ and $\rho_T$ alone, our approach first establishes local SBs for each segment. The results of each local SBs are then systematically integrated to satisfy all specified marginal distributions $\{\rho_{t}\}_{t \in \calT}$  across the entire time interval $[0, T]$. This local construction strategy helps prevent the compounding of errors at intermediate time points while \textbf{parallel optimization} over each local interval is enabled. We first show theoretical basis  that the gluing local SBs achieve the overall multi-marginal SB solution, $\bbP^{\texttt{mSBP}}$, as below.
\begin{restatable}[Multi-Marginal Schrödinger Bridge]{cor}{corone}\label{corollary:Gluing Schrödinger Bridges} Let us assume a sequence of controls  $\{v^i, u^i\}_{i \in [1:k]}$, where each $v^i, u^i$ induced local SBs $\bbP^{i}$ of~\ref{eq:SBP} over local interval $[t_{i-1}, t_i]$ with distributions $(\rho_{t_{i-1}}, \rho_{t_i})$ in a forward and backward direction, respectively. If $\lim_{t \uparrow t_i} v^i(t, \bx) = v^{i+1}(t, \bx)$ and $\lim_{t \downarrow t_{i-1}} u^i(t, \bx) = u^{i-1}(t, \bx)$ for all $i \in [1:k]$, then $\bbP^{\texttt{mSBP}}$ of~\ref{eq:MSBP} induced by following SDEs:

\graybox{
\begin{subequations}
    \begin{gather}\label{eq:forward MSBM}
        \hspace{-8mm} d\bX^{\star}_t = \left[ f_t(\bX^{\star}_t) + \sigma v^{\star}(t, \bX^{\star}_t) \right]dt + \sigma d\bW_t, \quad \bX^{\star}_{0} \sim \rho_{0}. \\ \label{eq:backward MSBM}
        d\bY^{\star}_t = \left[-f_{T - t}(\bY^{\star}_t) + \sigma u^{\star}(t, \bY^{\star}_t) \right]dt + \sigma d\bW_t, \quad \bY^{\star}_{0} \sim \rho_{T}, \\
        \text{where} \quad v^{\star}(t, \bx) = \textstyle{\sum_{i=1}^{k}} \mb{1}_{[t_{i-1}, t_i)}(t) v^{i}(t, \bx), \quad u^{\star}(t, \bx) = \textstyle{\sum_{i=1}^{k}} \mb{1}_{(t_{i-1}, t_i]}(t) u^{i}(t, \bx). \label{eq:forward-backward control}
    \end{gather}
\end{subequations}
}

\end{restatable}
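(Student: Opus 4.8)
The plan is to deduce the corollary from the uniqueness result in \Cref{proposition:uniquness}: it suffices to check that the glued measure $\bbP^{\star}$ induced by the forward SDE~\eqref{eq:forward MSBM} is (i) Markov, (ii) in the reciprocal class of $\bbQ$, and (iii) satisfies $\bbP^{\star}_t = \rho_t$ for every $t \in \calT$; then $\bbP^{\star} = \bbP^{\texttt{mSBP}}$ follows at once. The conceptual heart of the argument is that the piecewise drift $v^{\star} = \sum_{i=1}^{k} \mathbf{1}_{[t_{i-1}, t_i)} v^i$ from~\eqref{eq:forward-backward control} is exactly the genuine multi-marginal Markov projection drift of \Cref{proposition:Multi-Marginal Markovian Projection}: each local SB drift $v^i$ is precisely the conditional expectation $\bbE_{\Pi_{t_i|t}}[\nabla \log \bbQ_{t_i|t}(\bX_{t_i}\mid \bX_t)\mid \bX_t=\bx]$ restricted to $[t_{i-1},t_i)$, so once the local reciprocal mixtures are shown to assemble into the global one, properties (i)--(iii) are inherited from that proposition.

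First I would establish (ii). Each control $v^i$ induces a Schrödinger bridge $\bbP^i$ on $[t_{i-1}, t_i]$, which is by construction the Markov projection of the local reciprocal mixture $\calR(\bbP^i, \{t_{i-1},t_i\})$; hence $\bbP^i$ lies in the reciprocal class of $\bbQ$ on that subinterval, i.e. $\bbP^i_{|t_{i-1},t_i} = \bbQ_{|t_{i-1},t_i}$. Invoking the factorization $\bbQ_{|\calT} = \prod_{i=1}^{k}\bbQ_{|t_{i-1},t_i}$ from \Cref{proposition:reciprocal process}, the conditional law of the glued measure factorizes as $\bbP^{\star}_{|\calT} = \prod_{i=1}^{k}\bbP^i_{|t_{i-1},t_i} = \prod_{i=1}^{k}\bbQ_{|t_{i-1},t_i} = \bbQ_{|\calT}$, so $\bbP^{\star}$ is reciprocal with respect to $\bbQ$.

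Next I would propagate the marginals by induction on $i$ while simultaneously settling (i). The forward continuity hypothesis $\lim_{t\uparrow t_i} v^i(t,\bx) = v^{i+1}(t,\bx)$ makes $f_t + \sigma v^{\star}(t,\cdot)$ continuous in $t$ across every joint $t_i$, so~\eqref{eq:forward MSBM} defines a single time-inhomogeneous It\^o diffusion on all of $[0,T]$, which as the solution of one SDE with continuous drift is a continuous Markov process, giving (i). For the marginals: $\bbP^1$ solves~\ref{eq:SBP} from $\rho_0$ to $\rho_{t_1}$, and on $[0,t_1]$ the glued dynamics share the drift $v^1$ and the initial law $\rho_0$, so $\bbP^{\star}|_{[0,t_1]} = \bbP^1$ and $\bbP^{\star}_{t_1} = \rho_{t_1}$. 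Assuming $\bbP^{\star}_{t_{i-1}} = \rho_{t_{i-1}}$, the local bridge $\bbP^i$ transports $\rho_{t_{i-1}}$ to $\rho_{t_i}$; since the drift on $[t_{i-1},t_i)$ is $v^i$ and the initial marginal agrees, strong uniqueness of the SDE gives $\bbP^{\star}|_{[t_{i-1},t_i]} = \bbP^i$, hence $\bbP^{\star}_{t_i} = \rho_{t_i}$. The shared boundary datum $\rho_{t_i}$ is used both as the terminal constraint of segment $i$ and the initial constraint of segment $i+1$, so the chaining is consistent and (iii) holds for all $t\in\calT$. With (i)--(iii) in hand, \Cref{proposition:uniquness} yields $\bbP^{\star}=\bbP^{\texttt{mSBP}}$. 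The backward SDE~\eqref{eq:backward MSBM} is then handled symmetrically: each $u^i$ is the time-reversed (Nelson) drift of $\bbP^i$, the backward continuity hypothesis $\lim_{t\downarrow t_{i-1}} u^i(t,\bx)=u^{i-1}(t,\bx)$ glues these into a continuous backward drift $u^{\star}$, and the analogous induction started from $\bbY^{\star}_0\sim\rho_T$ reproduces the same marginals in reverse; since a Markov measure in the reciprocal class of $\bbQ$ is pinned down by either its forward or its backward drift, both~\eqref{eq:forward MSBM} and~\eqref{eq:backward MSBM} induce $\bbP^{\texttt{mSBP}}$.

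I expect the main obstacle to be the interface between the local and global descriptions in step (i)/(iii): rigorously justifying that the separately-defined local Markov projections assemble into a single global Markov diffusion whose marginals match at the interior nodes. The delicate point is that the global reciprocal process constrained on all of $\calT$ must genuinely restrict to the local reciprocal processes on each subinterval---this is exactly what \Cref{proposition:reciprocal process} supplies---and the continuity hypotheses on $\{v^i,u^i\}$ are precisely what rule out a drift discontinuity at the joints that would otherwise break Markovianity and violate the global Fokker--Planck equation of \Cref{proposition:Multi-Marginal Markovian Projection}.
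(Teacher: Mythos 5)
Your proposal is correct and follows essentially the same route as the paper's proof: construct the glued measure, use the continuity hypothesis on the piecewise controls to obtain a continuous (hence Markov) global diffusion, verify membership in the reciprocal class of $\bbQ$ and the marginal constraints, and conclude via the uniqueness proposition. Your version is in fact somewhat more explicit than the paper's (the induction propagating the marginals across subintervals and the factorization argument for the reciprocal class are only implicit in the paper's "by this construction" step), but the underlying argument is the same.
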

Building upon Corollary~\ref{corollary:Gluing Schrödinger Bridges}, we introduce our Multi-Marginal Schrödinger Bridge Matching (MSBM) method to solve the~\ref{eq:MSBP}. A cornerstone of MSBM is divide the global~\ref{eq:MSBP} into local~\ref{eq:SBP}s while maintaining the continuity of the composite drift functions $v^{\star}$ and $u^{\star}$ in~\eqref{eq:forward-backward control} across adjacent intervals, which guarantees a globally continuous diffusion process inducing $\bbP^{\texttt{mSBP}}$. Furthermore, by explicitly constraining each local SBs, $\bbP^i$, on its corresponding marginals $(\rho_{t_{i-1}}, \rho_{t_i})$, MSBM is designed to mitigate the accumulation of bias at intermediate marginals, as shown in~\Cref{fig:naive_vs_MSBM}.

A key challenge of the MSBM is rigorously satisfying the continuity conditions at the boundaries of local controls: $\lim_{t \uparrow t_i} v^i(t, \bx) = v^{i+1}(t, \bx)$ and $\lim_{t \downarrow t_{i-1}} u^i(t, \bx) = u^{i-1}(t, \bx)$ for all $i \in [1:k]$. If these conditions are not met, discontinuities or ``kinks'' can arise at the intermediate time steps. Such kinks would imply that the overall path measure $\bbP^{^{\star}} \neq \calM^{\texttt{mm}}(\bbP^{^{\star}}, \calT)$. This would, in turn, hinder the optimality for~\ref{eq:MSBP}, because, following ~\Cref{proposition:uniquness}, the desired continuous Markov process is a fixed point of both $\calR^{\texttt{mm}}$ and Markov projections $\calM^{\texttt{mm}}$ under multiple time points $\calT$: 
\[\label{eq:fixed_point}
    \bbP^{^{\star}} = \calR^{\texttt{mm}}(\bbP^{^{\star}}, \calT) = \calM^{\texttt{mm}}(\bbP^{^{\star}}, \calT).
\]
To construct local SBs such that the continuity requirements for forming a valid global solution are met, thereby preventing the aforementioned kinks and ensuring~\eqref{eq:fixed_point}, our MSBM introduces a shared global parametrization $v_{\theta}, u_{\phi}$ for its respective local controls $\{v^i, u^i\}_{i \in [1:k]}$ for each sub-interval. The continuity of standard neural networks with respect to their input, hence seamlessly guarantees the continuous condition in~\ref{corollary:Gluing Schrödinger Bridges} while each local controls can be updated in parallel manner with following aggregate objective function:

\graybox{
\[
        \tilde{\calL}(\theta) = \sum_{i=1}^k \calL(\theta, \calT_i, \Pi_{\calT_i}), \quad \tilde{\calL}(\phi) = \sum_{i=1}^k \calL(\phi, \calT_i, \Pi_{\calT_i}),
\]
}

where $\calT_i = \{t_{i-1}, t_i\}$ define sub-intervals with local coupling $\Pi_{\calT_i}$ for end-points marginals in interval $[t_{i-1}, t_i]$ and $\calL$ is defined in~\eqref{eq:forward objective} and~\eqref{eq:backward objective} for forward and backward direction, respectively.

The MSBM training procedure, summarized in~\Cref{alg:MSBM}, adapts the standard IMF algorithm presented in~\citep[Algorithm 1]{shi2024diffusion}. A key distinction in our MSBM approach is the parallel application of the  IMF procedure to each local time interval, utilizing globally shared forward $v_{\theta}$ and backward $u_{\phi}$ across all local intervals. This parallel processing across sub-intervals contributes to a significant reduction in overall training time.

\section{Related Work}

The solution of~\ref{eq:SBP} often utilize Iterative Proportional Fitting (IPF)~\citep{kullback1968probability}, with modern adaptations learning SDE drifts for two-marginal settings~\citep{bortoli2021diffusion, vargas2021solving, chen2022likelihood, deng2024variational}. A distinct iterative approach, IMF, as featured in~\citep{shi2024diffusion, peluchetti2023diffusion}, offers improved stability by alternating projections onto different classes of path measures. Moreover, emerging research also explores non-iterative algorithm~\citep{de2024schrodinger, peluchetti2025bm}. These methodologies primarily concentrate on the SB problem itself, iteratively refining path measures or directly computing the bridge measure. Moreover, the SB algorithm is studied under the assumption that the optimal coupling is given~\citep{liu2023i2sb, somnath2023aligned}. While recent studies have extended foundational SB ideas to the multi-marginal setting of~\ref{eq:MSBP}, research in this area remains relatively limited.

In multi-marginal setting,~\citep{chen2023deep} extends the problem to phase space to encourage smoother trajectories and introduces a novel training methodology inspired by the Bregman iteration~\citep{bregman1967relaxation} to handle multiple marginal constraints. Relatedly,~\citep{shen2024multi} presented an approach that, similar to our work, segments the problem across intervals; they learn piecewise SBs and use likelihood-based training to iteratively refine a global reference dynamic. While these methods are often IPF-based or focus on specific reference refinement strategies, our MSBM extends the previous IMF-type algorithm into multi-marginal setting and effectively handles multiple constraints. We demonstrate that our MSBM framework offers substantial gains in training efficiency. This enhanced efficiency is primarily attributed to its direct multi-marginal formulation that adeptly manages multiple constraints, thereby circumventing the computationally intensive iterative refinements common in IPF-based methods

Paralleling these SB-centric developments, other significant lines of work model dynamic trajectories by directly learning potential functions or velocity fields, often drawing from optimal transport or continuous normalizing flows. For instance,~\citep{koshizuka2022neural, liu2022deep, liu2024generalized, liu2024deep}  extend SBs to incorporate potentials or mean-field interactions, connecting to stochastic optimal control and earlier mean-field game frameworks~\citep{ruthotto2020machine, lin2021alternating}. The broader field of trajectory inference from snapshot data, crucial for applications like scRNA-seq, has seen methods like ~\citep{tong2020trajectorynet} using CNFs with dynamic OT, and~\citep{huguet2022manifold} employing Neural ODEs on learned data manifolds. More recently,~\citep{neklyudov2023action, neklyudov2023computational} offer variational objectives to learn dynamics from marginal samples. 

\section{Experiments}
We empirically demonstrate the effectiveness of our MSBM. Specifically, our goal is to infer a dynamic model from datasets composed of samples from marginal distributions $\rho_t$ observed at discrete time points. We evaluate MSBM on both synthetic datasets and real-world single-cell RNA sequencing datasets, including human embryonic stem cells (hESC)~\citep{chu2016single}, embryoid body (EB)~\citep{moon2019visualizing} CITE-seq (CITE) and Multiome (MULTI)~\citep{burkhardt2022multimodal}. To ensure consistency and fair comparison, our experiments follow the respective experimental setups established by baseline methods. In particular, for the petal dataset, we adopt the experimental setup from DMSB~\citep{chen2023deep}, and for the hESC dataset, we follow SBIRR~\citep{shen2024multi}. For the EB dataset, we perform evaluations on both 5-dim and 100-dim PCA-reduced data; here, we follow the 100-dim experimental setup of DMSB and the 5-dim setup from NLSB~\citep{koshizuka2022neural}. For the CITE and MULTI datasets, we follow the setup in~\citep{tong2023simulation,tong2024improving}. Accordingly, we follow evaluation protocols and utilize evaluation metrics consistent with previous studies, including the Sliced-Wasserstein Distance (SWD)\citep{bonneel2015sliced}, Maximum Mean Discrepancy (MMD)\citep{gretton2012kernel}, as well as the 1-Wasserstein ($\mathcal{W}_1$) and 2-Wasserstein ($\mathcal{W}_2$) distances. All experimental results reported are averaged mean value over three independent runs with different random seeds. We leave further experimental details in Appendix~\ref{sec:experimental details}.

\begin{figure}[t]
\centering
\includegraphics[width=0.9\linewidth]{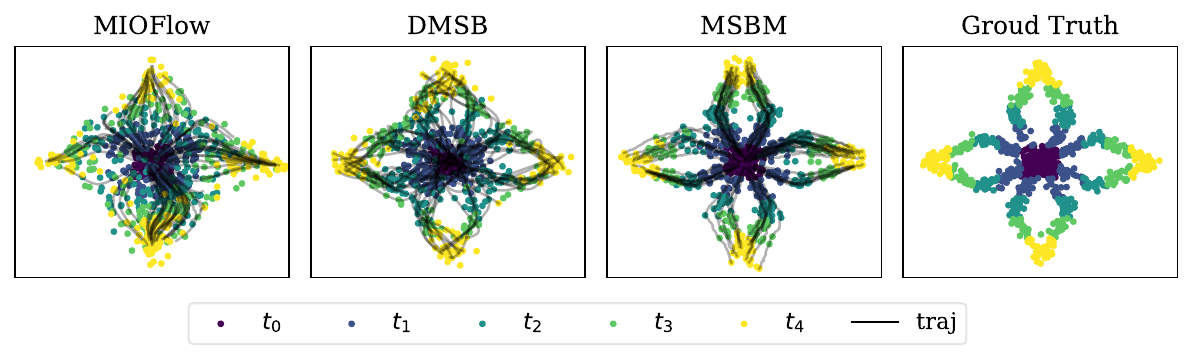}
\captionof{figure}{Comparison of generated population dynamics using MIOFlow, DMSB and MSBM on a 2-dim petal dataset. All trajectories are generated by simulating the dynamics from $\rho_{t_0}$.}
\label{fig:petal}
\end{figure}

\begin{wrapfigure}[9]{r}{0.5\textwidth}
\vspace{-5mm}
\centering
\includegraphics[width=1.\linewidth]{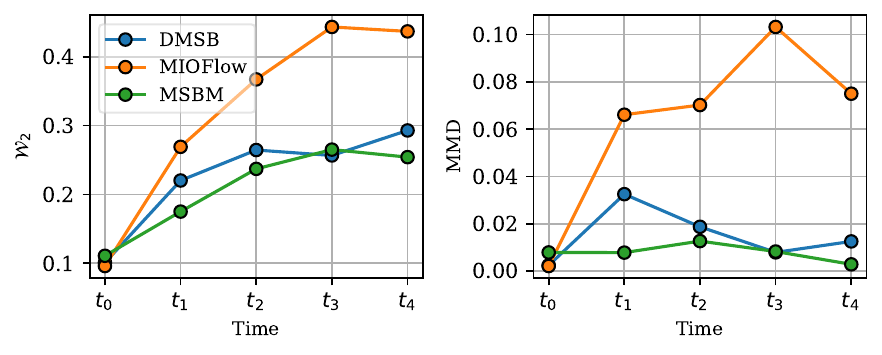}
    \vspace*{-8mm}
    \caption{Evaluation results of $\calW_2$ and MMD.}\label{fig:petal_distance}
\end{wrapfigure}

\subsection{Synthetic Data}

\paragraph{Petal} The petal dataset~\citep{huguet2022manifold}  serves as a simple yet complex challenge because it mimics the natural dynamics seen in processes such as cellular differentiation, which include phenomena like bifurcations and merges. We compare our MSBM with MIOFlow~\citep{huguet2022manifold} and DMSB~\citep{chen2023deep} in~\Cref{fig:petal_distance}. As shown in~\Cref{fig:petal}, we observe that MSBM exhibits the most accurate and clearly defined trajectory, closely resembling the ground truth. Furthermore,~\Cref{fig:petal_distance} demonstrates the evaluation results for the trajectories through $\calW_2$ and MMD distances, highlighting that MSBM consistently outperforms MIOFlow and DMSB.

\subsection{Single-cell Sequencing Data}
We evaluated our MSBM on real-world single-cell RNA sequencing data from two sources: \textbf{1}) human embryonic stem cells (hESCs)~\citep{chu2016single} undergoing differentiation into definitive endoderm over a 4-day period, measured at 6 distinct time points ($t_0$:0 hours, $t_1$:12 hours, $t_2$:24 hours, $t_3$:36 hours, $t_4$:72 hours, and $t_5$:96 hours); \textbf{2}) embryoid body (EB) cells~\citep{moon2019visualizing} differentiating into mesoderm, endoderm, neuroectoderm, and neural crest over 27 days, with samples collected at 5 time windows ($t_0$:0-3 days, $t_1$:6-9 days, $t_2$:12-15 days, $t_3$:18-21 days, and $t_4$:24-27 days). Following the experimental setup of baselines, we preprocessed these datasets using the pipeline outlined in~\citep{tong2020trajectorynet}, and the collected cells were projected into a lower-dimensional space using principal component analysis (PCA).

\begin{wraptable}[12]{r}{0.31\textwidth}
\vspace{-6mm}
\centering
\scriptsize
    \caption{Performance on the 5-dim PCA of hESC dataset. $\calW_2$ is compute between test $\rho_{t_i}$ and generated $\hat{\rho}_{t_i}$ by simulating the dynamics from test $\rho_{t_0}$. \best{Best results are highlighted}.}
    \setlength{\tabcolsep}{2.0pt}
\begin{tabular}{l|c c|c}
        \toprule
         & \multicolumn{2}{c}{$\calW_2$ ↓} & \multicolumn{1}{c}{Runtime} \\
        \cmidrule(lr){2-4}
        Methods & $t_1$ & $t_3$ & hours \\
        \midrule 
        DMSB$^{\dag}$ 
        & 1.10 
        & 1.51 
        & 15.54\\
        SBIRR$^{\dag}$ 
        &\cellbg \textbf{1.08} 
        & $1.33$
        & 0.36 (0.38)$^{*}$ \\
        \midrule
        \textbf{MSBM} (Ours) 
        & $1.09$
        &\cellbg \textbf{1.30} 
        &\cellbg \textbf{0.09}  \\
        \bottomrule
    \end{tabular}
\begin{tablenotes}
\item $\dag$  result from~\citep{shen2024multi}.
\end{tablenotes}\label{tab:hESC}
\end{wraptable}

\paragraph{hESC} To follow the experimental setup from SBIRR~\citep{shen2024multi}, we reduced the data to the first five principal components and excluded the final time point $t_6$ from our dataset, resulting in three training time points $\calT = \{t_0, t_2, t_4\}$ and two intermediate test points $\calT_{\texttt{test}} = \{t_1, t_3\}$. Our objective was to train the dynamics based on the available marginals at the training points in $\calT$ and interpolate the intermediate test marginals at $\calT_{\texttt{test}}$, which were not observed during training. \Cref{tab:hESC} demonstrates that our proposed MSBM method performs competitively, achieving lower $\calW_2$ distances.

\paragraph{Embryoid Body } We validate our MSBM on both 5-dim and 100-dim PCA spaces. First, for the 5-dim experiment, we adopt the experimental setup from~\citep{koshizuka2022neural}. Given 5 observation points $\calT = \{t_0, t_1, t_2, t_3, t_4\}$, we divide the data train/test splits $\rho^{\texttt{tr}}_\calT/\rho^{\texttt{te}}_\calT$, with the goal of predicting population dynamics from $\rho^{\texttt{tr}}_{t_0}$. We train the dynamics based on $\rho^{\texttt{tr}}_\calT$ and evaluate the $\calW_1$ distance between $\rho^{\texttt{te}}_{t_i}$ and the generated $\hat{\rho}_{t_i}$ from previous test snapshot $\rho^{\texttt{te}}_{t_{i-1}}$. In~\Cref{table:RNAsc-NLSB}, 
we find that MSBM outperforms baseline SB methods. 

\begin{wraptable}[12]{r}{0.5\textwidth}
\vspace{-5mm}
\centering
\scriptsize
\caption{Performance on the 5-dim PCA of EB dataset. $\calW_1$ is computed between test $\rho^{\texttt{te}}_{t_i}$ and generated $\hat{\rho}_{t_i}$ by simulating the dynamics from previous test $\rho^{\texttt{te}}_{t_{i-1}}$.\best{Best results are highlighted}.}
\setlength{\tabcolsep}{2.0pt}
\begin{tabular}{l|ccccc}
\toprule
         & \multicolumn{5}{c}{$\calW_1 \downarrow$}   \\
        \cmidrule(lr){2-6} 
Methods & $t_1$ & $t_2$ & $t_3$ & $t_4$ & Mean \\
\midrule
NLSB$^\dag$~\citep{koshizuka2022neural}        
& $0.68$ 
& $0.84$ 
& $0.81$ 
& $0.79$ 
& $0.78$ \\
IPF (GP)$^\dag$~\citep{vargas2021solving}        
& $0.70$ 
& $1.04$ 
& $0.94$ 
& 0.98
& $0.92$ \\
IPF (NN)$^\dag$~\citep{bortoli2021diffusion}  
& 0.74
& 0.89 
& 0.84
& 0.83
& 0.82 \\
SB-FBSDE$^\dag$~\citep{chen2022likelihood}  
&\cellbg \textbf{0.56}
& 0.80 
& 1.00 
& 1.00
& 0.84 \\
\midrule
MSBM (Ours) 
& 0.64
&\cellbg  $\mathbf{0.73}$ 
&\cellbg  $\mathbf{0.72}$ 
&\cellbg  $\mathbf{0.73}$ 
&\cellbg  $\mathbf{0.71}$ \\
\bottomrule
\end{tabular}\label{table:RNAsc-NLSB}
\begin{tablenotes}
\item $\dag$ result from~\citep{koshizuka2022neural}.
\end{tablenotes}
\end{wraptable}

For the 100-dim experiment, we borrow the experimental setup from DMSB, where the goal is predict population dynamics given that observations are available for all time points $\calT$ (denoted as $\texttt{Full}$ in~\Cref{table:RNAsc-DMSB}), or when one of the snapshot is left out (denoted as $t_i$ in~\Cref{table:RNAsc-DMSB}, where snapshot $\rho^{\texttt{tr}}_{t_i}$ at $t_i$ is excluded during training). The high performance in this task represent the robustness of the model to accurately predict population dynamics. In~\Cref{table:RNAsc-DMSB}, MSBM consistently yields performance improvements. Moreover, as shown in \Cref{figure:RNAsc-DMSB}, the trajectories and generated marginal distributions $\hat{\rho}_{\calT}$ in PCA space further justifies the numerical result and highlights the variety and quality of the samples produced by MSBM.

\begin{figure}[t]
\begin{minipage}[c]{0.4\linewidth}
    \vspace{-4.5mm}
    \scriptsize
    \captionof{table}{Performance on the 100-dim PCA of EB dataset. MMD and SWD are computed between test $\rho^{\texttt{te}}_{t_i}$ and generated $\hat{\rho}_{t_i}$ by simulating the dynamics from test $\rho^{\texttt{te}}_{t_0}$. \best{Best results are highlighted}.}
    \setlength{\tabcolsep}{1.0pt}
    \begin{tabular}{l|cccc|cccc}
        \toprule
         & \multicolumn{4}{c}{MMD ↓} & \multicolumn{4}{c}{SWD ↓} \\
        \cmidrule(lr){2-5} \cmidrule(lr){6-9}
        Methods & \texttt{Full} & ${t_1}$ & ${t_2}$ & ${t_3}$ & \texttt{Full} &  ${t_1}$ & ${t_2}$ & ${t_3}$\\
        \midrule
        NLSB$^\dag$
        & $0.66$ 
        & $0.38$ 
        & $0.37$
        & $0.37$ 
        & $0.54$  
        & $0.55$ 
        & $0.54$ 
        & $0.55$ \\
        DMSB$^\dag$
        & 0.03 
        &\cellbg  $\mathbf{0.04}$ 
        &\cellbg  $\mathbf{0.04}$ 
        &\cellbg  $\mathbf{0.04}$ 
        & 0.16
        & 0.20 
        & 0.19
        &\cellbg  $\mathbf{0.18}$ \\
    \midrule
        \textbf{MSBM} 
        &\cellbg  $\mathbf{0.02}$ 
        &\cellbg  $\mathbf{0.04}$ 
        &\cellbg  $\mathbf{0.04}$
        & 0.05 
        &\cellbg  $\mathbf{0.11}$ 
        &\cellbg  $\mathbf{0.18}$ 
        &\cellbg  $\mathbf{0.17}$ 
        &0.19 \\
    \bottomrule
    \end{tabular}
\begin{tablenotes}
\item $\dag$  result from~\citep{chen2023deep}.
\end{tablenotes}
\label{table:RNAsc-DMSB} 
\end{minipage}
\begin{minipage}[c]{0.52\linewidth}
\centering
\captionof{figure}{Comparison of generated population dynamics using DMSB and MSBM on a 100-dim PCA of EB dataset. The plot displays the first two principal components as the x and y axes, respectively.}
\includegraphics[width=1.\linewidth]{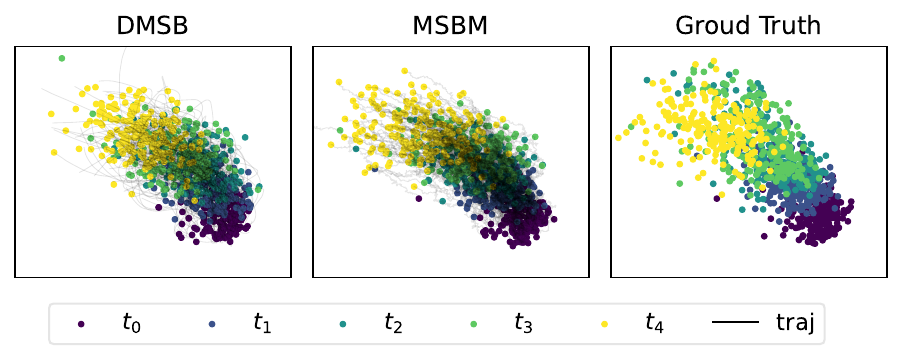}
\label{figure:RNAsc-DMSB} 
\vspace{-2mm}
\end{minipage}
\vspace{-4mm}
\end{figure}

\begin{table}
\centering
\scriptsize
\caption{Performance on the 5-dim and 100-dim PCA of CITE/MULTI dataset. $\calW_1$ are computed between test $\rho^{\texttt{te}}_{t_i}$ and generated $\hat{\rho}_{t_i}$ by simulating the dynamics from test $\rho^{\texttt{te}}_{t_{i-1}}$. \best{Best results are highlighted}.}
\begin{tabular}{l|cc|cc|c}
\toprule
\multirow{2}{*}{Model} & \multicolumn{2}{c}{CITE} & \multicolumn{2}{c}{MULTI} & \multicolumn{1}{c}{Runtime}\\  \cmidrule{2-6}
 & 5dim & 100dim & 5dim & 100dim & hours \\ \midrule
OT-CFM$^{\dag}$ 
& 0.88{\color{gray}\scriptsize$\pm$0.06}  
&\cellbg  \textbf{45.39{\color{gray}\scriptsize$\pm$0.42}} 
& 0.94{\color{gray}\scriptsize$\pm$0.05} 
& 54.81{\color{gray}\scriptsize$\pm$5.86} 
& -  \\ 
WLF-SB$^{\dag}$ 
&\cellbg  \textbf{0.80{\color{gray}\scriptsize$\pm$0.02}}  
& 46.13{\color{gray}\scriptsize$\pm$0.08}  
& 0.95{\color{gray}\scriptsize$\pm$0.21}  
& 55.07{\color{gray}\scriptsize$\pm$5.50}  
& 0.51 \\ 
DMSB 
& 0.82{\color{gray}\scriptsize$\pm$0.04}  
& 50.24{\color{gray}\scriptsize$\pm$0.99}   
& 1.03{\color{gray}\scriptsize$\pm$0.12}  
& 58.51{\color{gray}\scriptsize$\pm$5.11}  
& 2.16  \\ 
\midrule
\textbf{MSBM} 
& 0.87{\color{gray}\scriptsize$\pm$0.05}  
& 45.93{\color{gray}\scriptsize$\pm$0.37}  
&\cellbg  \textbf{0.91{\color{gray}\scriptsize$\pm$0.22}}
&\cellbg  \textbf{54.75{\color{gray}\scriptsize$\pm$6.21}}  
&\cellbg  \textbf{0.08} \\
\bottomrule
\end{tabular}
\begin{tablenotes}
\centering
\item $\dag$  result from~\citep{kapusniak2024metric}.
\end{tablenotes}
\label{table:cite-multi}
\vspace{-4mm}
\end{table}

\paragraph{CITE-seq and Multiome} Finally, we compare with the flow-matching variants such as OT-CFM~\citep{tong2024improving} and the OT-based method WLF-SB~\citep{neklyudov2023computational}. Here, we perform a leave-one-timepoint-out evaluation. In particular, for each $t_i$, we train on all marginals except $t_i$ and evaluate using the $\calW_1$ distance between the predicted marginal $\widehat{\rho}_{t_i}$ and the held-out marginal $\rho_{t_i}^{\texttt{te}}$. We validate MSBM in 5- and 100-dimensional PCA spaces. \Cref{table:cite-multi} shows that MSBM achieves performance comparable on the CITE dataset and the best results on the MULTI dataset in both low- and high-dimensional settings.

\paragraph{Computational Efficiency} Moreover, we also measure end-to-end training time on the CITE‑seq 100-dim using the same hardware for every method (a single NVIDIA RTX 3090 GPU). Although MSBM is not the state-of-the-art in every configuration, it still clearly outperforms all baselines in training speed. In particular, MSBM achieves a total runtime more than $27\times$ faster than DMSB. Note that OT‑CFM is excluded because an implementation details for this specific task is unavailable, which prevents a fair comparison.

This enhanced computational efficiency primarily originates from core algorithmic differences. DMSB employs an IPF-type objective with Bregman Iteration~\citep{bregman1967relaxation}. In contrast, MSBM directly optimizes controls using an IMF-type objective, which not only eliminates the need to store intermediate states but also facilitates parallel computation across sub-intervals. This approach substantially promotes faster convergence of the algorithm.

\section{Conclusion and Limitation}
This paper revisits previously established frameworks for the~\ref{eq:SBP}, extending them to the~\ref{eq:MSBP}. Specifically, we introduce a computationally efficient framework for~\ref{eq:MSBP}, termed MSBM, which builds upon existing SBM methods~\citep{shi2024diffusion, peluchetti2023diffusion}. MSBM is tailored for various trajectory inference problems where snapshots of data are available at multi-marginal time steps. Through the successful adaptation of the IMF algorithm to this multi-marginal setting, our approach significantly accelerates training processes while ensuring accurate dynamic modeling when compared to existing methods.

Despite these advantages, the performance degradation of MSBM is more pronounced than that of DMSB when a time point is omitted in~\Cref{table:RNAsc-DMSB}. This  may occur because the including velocity term could better accommodate unknown trajectory. Furthermore, the current MSBM framework is restricted to the case involving snapshot data samples, highlighting a need for enhancements to address problems with continuous potentials~\citep{chen2022most, liu2024generalized}, such mean-field games~\citep{liu2022deep, liu2024deep}.

\subsubsection*{Broader Impact Statement}
This paper presents work aimed at advancing the field of machine learning. Our research may have various societal consequences. However, we do not believe any of these require specific emphasis here.


\bibliography{main}
\bibliographystyle{tmlr}

\newpage
\appendix
\section{Control Theoretic Multi-Marginal Schrödinger Bridge Formulation}\label{sec:Multi-Marginal SB Optimality}

In this section, we examine the structure of the solution and the reduction of the Schrödinger Bridge problem in the multi-marginal setting. Since the case of two marginals has been thoroughly studied in prior works~\citep{leonard2013survey,chen2021stochastic}, we aim to directly extend these results to the multi-marginal scenario.

In standard~\ref{eq:SBP}, the optimality condition can be characterized in terms of Doob's $h$-transform~\citep{jamison1975markov, chen2021stochastic}, with two potential functions $(\overrightarrow{\Phi}, \overleftarrow{\Phi})$ that satisfy the forward and backward time harmonic equations.
\begin{theorem}[Dynamics SB optimality~\citep{pavon1991free}]\label{theorem:ynamics SB optimality}
    Let $(\ora\Phi, \ola\Phi)$ be the solutions to the following PDEs:
        \[
            & \partial_t \ora\Psi + \nabla \ora\Phi^{\top} f + \tfrac{1}{2} \sigma^2 \Delta\ora\Psi = 0, \quad \partial_t \ola\Phi + \nabla \cdot (\ola\Phi f)  - \tfrac{1}{2} \sigma^2 \Delta\ola\Phi = 0, \\
            & \rho_{0}(\bx) = \ora\Phi(0, \bx)\ola\Phi(0, \bx), \rho_{T}(\bx) = \ora\Psi(T, \bx)\ola\Phi(T, \bx). \label{eq:boundary_condition_12_sbp}
        \]
    Then, the solution $\bbP^{\texttt{SBP}}$ of~\ref{eq:SBP} is induced by following forward-backward SDEs:
    \begin{subequations}
        \begin{gather}
            \hspace{-14mm}d\ora\bX_t = \left[f_t +  \sigma^2 \nabla_{\bx} \ora\Phi(t,\ora\bX_t) \right]dt + \sigma d\bW_t, \quad \ora\bX_0 \sim \rho_0, \label{eq:forward SBP_appx}\\
            d\ola\bX_t = \left[-f_{T-t} + \sigma^2 \nabla_{\bx} \ola\Phi(T-t,\ola\bX_t) \right]dt + \sigma d\bW_t, \quad \ola\bX_0 \sim \rho_T. \label{eq:backward SBP_appx}
        \end{gather}
    \end{subequations}    
\end{theorem}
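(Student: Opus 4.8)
The plan is to prove the theorem by the classical two-step reduction of the Schrödinger bridge: first collapse the \emph{dynamic} problem \ref{eq:SBP} to a \emph{static} optimal-coupling problem over the endpoints, then solve the static problem through the Schrödinger system and reconstruct the forward/backward dynamics by Doob's $h$-transform and time reversal. Concretely, I would first disintegrate the objective along the endpoint marginal $\calT=\{0,T\}$. Writing $\bbP=\bbP_{0,T}\,\bbP_{|0,T}$ and using the chain rule for relative entropy,
\[
\KL(\bbP\,|\,\bbQ)=\KL(\bbP_{0,T}\,|\,\bbQ_{0,T})+\E_{\bbP_{0,T}}\!\big[\KL(\bbP_{|0,T}\,|\,\bbQ_{|0,T})\big].
\]
Only $\bbP_{0,T}$ is constrained by $\bbP_0\sim\rho_0$, $\bbP_T\sim\rho_T$, so the second term is minimized (and vanishes) exactly when $\bbP_{|0,T}=\bbQ_{|0,T}$, i.e.\ the optimizer is reciprocal with respect to $\bbQ$ in the sense of \eqref{eq:reciprocal projection prelim}. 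Hence $\bbP^{\texttt{SBP}}=\pi^{\star}\bbQ_{|0,T}$, and it remains to solve the static problem $\min\{\KL(\pi\,|\,\bbQ_{0,T}):\pi_0=\rho_0,\ \pi_T=\rho_T\}$.

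Next I would characterize $\pi^{\star}$ and build the potentials. Strict convexity of $\KL$ together with Lagrangian duality for the two marginal constraints forces the product factorization $\pi^{\star}(\bx_0,\bx_T)=\hat\psi(\bx_0)\,\bbQ_{0,T}(\bx_0,\bx_T)\,\psi(\bx_T)$, where the nonnegative potentials solve the \emph{Schrödinger system}
\[
\rho_0=\hat\psi\!\int \bbQ_{0,T}(\cdot,\by)\psi(\by)\,d\by,\qquad \rho_T=\psi\!\int \hat\psi(\bx)\bbQ_{0,T}(\bx,\cdot)\,d\bx.
\]
I would then promote these to space--time potentials: define $\ora\Psi(t,\bx):=\E_{\bbQ}[\psi(\bX_T)\mid\bX_t=\bx]$, which by Feynman--Kac is space--time harmonic for the generator of \eqref{eq:reference measure} and therefore solves the backward linear equation stated in the theorem; dually, propagating $\hat\psi$ forward under \eqref{eq:reference measure} yields $\ola\Phi$ solving the Fokker--Planck-type equation. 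The boundary identities $\rho_0=\ora\Psi(0,\cdot)\ola\Phi(0,\cdot)$ and $\rho_T=\ora\Psi(T,\cdot)\ola\Phi(T,\cdot)$ in \eqref{eq:boundary_condition_12_sbp} are then precisely the Schrödinger system, and the product $\ora\Psi(t,\cdot)\ola\Phi(t,\cdot)$ recovers the marginal density of $\bbP^{\texttt{SBP}}$ at every $t$.

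Finally I would reconstruct the SDEs. The Doob $h$-transform of $\bbQ$ by the harmonic weight $\ora\Psi$ tilts the reference drift from $f$ to $f+\sigma^2\nabla\log\ora\Psi$, i.e.\ the drift of \eqref{eq:forward SBP_appx} after identifying $\nabla\ora\Phi:=\nabla\log\ora\Psi$; a Girsanov computation then verifies that this tilted law starts at $\rho_0$, carries the endpoint coupling $\pi^{\star}$, and pushes forward to $\rho_T$, so it attains the minimum and is the unique SBP solution. The backward representation \eqref{eq:backward SBP_appx} follows from the Nelson--Anderson time-reversal formula, whose reversed drift is the one displayed there, expressed through the co-harmonic potential $\ola\Phi$.

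I expect the main obstacle to be the rigorous existence and uniqueness of the Schrödinger system $(\hat\psi,\psi)$---an IPF/Sinkhorn fixed-point argument that requires $\KL(\rho_0\otimes\rho_T\,|\,\bbQ_{0,T})<\infty$ and integrability control on the reference transition kernel---together with the technical justification of the $h$-transform, the Girsanov change of measure, and the time reversal under a \emph{general} drift $f$, which demand enough regularity for the potentials to solve the PDEs classically and for the reversed diffusion to be well posed. Since the statement is quoted from \citep{pavon1991free}, I would import these structural hypotheses from \citep{pavon1991free, chen2021stochastic} and keep the argument at the level of verifying that the constructed forward--backward pair satisfies both the marginal constraints and the stationarity conditions, rather than reproving the analytic foundations.
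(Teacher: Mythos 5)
Your proposal is correct and matches the paper's approach: the paper does not actually prove Theorem~\ref{theorem:ynamics SB optimality} (it is imported from \citep{pavon1991free}), but its proof of the multi-marginal generalization, Theorem~\ref{theorem:Multi-Marginal Schrödinger Potentials}, follows exactly your route --- static/dynamic KL disintegration, Lagrangian product factorization of the optimal coupling into potentials, promotion to space--time potentials via conditional expectation and Feynman--Kac, a Girsanov argument for optimality of the tilted drift, and Nelson time reversal for the backward SDE. Your identification $\nabla\ora\Phi := \nabla\log\ora\Psi$ also correctly resolves the statement's own $\Phi$-versus-$\log\Psi$ notational slippage, which the paper leaves implicit.
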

Note that the forward-backward stochastic process in~\eqref{eq:forward SBP_appx} and~\eqref{eq:backward SBP_appx} satisfying $\rho^{\eqref{eq:forward SBP_appx}}_t = \rho^{\eqref{eq:forward SBP_appx}}_t = \rho_t$ for all $ t\in [0, T]$ and $\rho_t$ obeys a factorization $\rho_t(\bx) = \ora\Phi(t, \bx) \ola\Phi(t, \bx)$. To solve the SB, one needs to solve the associated PDEs to estimate $(\ora\Phi, \ola\Phi)$. However, due to the high-dimensional nature of the problem, directly solving the PDEs is challenging~\citep{han2018solving}. Previous works~\citep{bortoli2021diffusion, chen2022likelihood, liu2022deep} has addressed this issue by formulating an IPF type algorithm~\citep{kullback1968probability}, where half-bridge optimization is iteratively repeated for the two boundary conditions:
\[\label{eq:IPF objective}
    \bbP^{(n+1)} := \argmin_{\bbP \in \calP_{[0, T]}, \bbP_T = \rho_T}\KL(\bbP | \bbP^{(n)}), \; \bbP^{(n+2)} := \argmin_{\bbP \in \calP_{[0, T]}, \bbP_0 = \rho_0}\KL(\bbP | \bbP^{(n+1)}).
\]
with initialization $\bbP^{(0)} := \bbQ$. Please refer to~\citep{bortoli2021diffusion} for more details.

Now, we extend the SB optimality in~\ref{theorem:ynamics SB optimality} to multi-marginal settings by defining appropriate potentials. 
Let $\bbQ \in \calP_{[0, T]}$ be a path measure induced by following Itô SDEs $d\bX_t = \sigma d\bW_t$.
We assume that a collection of $k+1$  marginal distributions $\rho_{\calT} := (\rho_{0}, \rho_{t_1}, \cdots, \rho_{t_k})$ is specified at timestamps $\calT = \{t_0, t_1, \cdots\, t_k\}$ where $=0=t_0 < t_1 \cdots < t_k=T$. We want to explore the most likely evolution between multiple marginals $\rho_{\calT}$ which is the solution of multi-marginal \texttt{SBP}:
\[\tag{\ccolor{\texttt{mSBP}}}\label{eq:MSBP_appx}
    \min_{\bbP \in \calP(\Omega)}  \KL(\bbP | \bbQ), \quad \text{subject to } \quad \bbP_t \sim \rho_t, \quad \forall t \in \calT.
\]
We first consider the \texttt{static} problem of~\ref{eq:MSBP_appx}. Consider the conditioning of the process on $\bX_0 = \bx_0, \bX_{t_1} = \bx_{t_1}, \cdots, \bX_{T} = \bx_{t_k}$ such that
\[
    & \bbP_{|\calT} = \bbP\left[\cdot | \bX_0=\bx_0, \bX_{t_1}= \bx_{t_1}, \cdots, \bX_{T} = \bx_T \right] \\
    & \bbQ_{|\calT} = \bbQ\left[\cdot | \bX_0=\bx_0, \bX_{t_1}= \bx_{t_1}, \cdots, \bX_{T} = \bx_T \right].
\]
These conditioned laws can be interpreted as the disintegration of path measure $\bbP, \bbQ \in \calP_{[0, T]}$ with respect to multiple time points. Given the multi-marginal joint distributions $\calT$, $\bbP_{\calT}, \bbQ_{\calT} \in \bbP_{\calT}$ associated with $\bbP$ and $\bbQ$ at timestamps $\calT$, respectively, the relative entropy admits the following decomposition:
\[\label{eq:disentagration_appx}
    \KL(\bbP|\bbQ) = \underbrace{\KL(\bbP_{\calT}|\bbQ_{\calT})}_{\texttt{static}} + \underbrace{\textstyle{\int_{\bbR^{d \times |\calT|}}} \KL(\bbP_{|\calT} | \bbQ_{|\calT})d\bbP_{\calT}}_{\texttt{dynamic}}.
\]
Since $\bbP_{\calT}$ and $\bbP_{|\calT}$ can be chosen arbitrarily, we may set $\bbP_{|\calT} = \bbQ_{|\calT}$ so that the \texttt{dynamic} component of the decomposition in~\eqref{eq:disentagration_appx} vanishes. Under this choice,~\ref{eq:MSBP_appx} reduces to the \texttt{static} problem:
\[\tag{\ccolor{\texttt{smSBP}}}\label{eq:sMSBP_appx}
    &\min_{\bbP_{\calT} \in \Pi_{\calT}} \KL(\bbP_{\calT} | \bbQ_{\calT}), \\
    & \text{where} \quad \Pi_{\calT} = \{\bbP_{\calT} \in \bbR^{d \times |\calT|} \Big| \textstyle{\int_{d \times (|\calT|-1)}} \bbP_{\calT}(\bx_{\calT^{(-i)}}) d\bx_{\calT^{(-i)}} = \rho_{t_i}, \forall i \in [0:k]\},
\]
where $\calT^{(-i)} = \{0, \cdots, t_{i-1}, t_{i+1}, \cdots, T\}$ denotes the set of indices excluding $t_i$. In other words, this formulation seeks a multi-marginal coupling $\bbP_{\calT}$ that matches the prescribed marginal $\rho_{\calT}$, while minimizing its relative entropy with respect to the reference joint law $\bbQ_{\calT}$. Therefore, once the optimal coupling $\Pi^{\star}_{\calT} = \bbP^{\star}_{\calT}$ solving~\ref{eq:sMSBP_appx} is obtained, it directly induces the solution to~\ref{eq:MSBP_appx} via 
\[\label{eq:reciprocal_optimal_couplig_appx}
    \bbP^{\star}(\cdot) = \int_{\bbR^{d \times |\calT|}} \bbQ_{|\calT}(\cdot) d\bbP^{\star}_{\calT},
\]
meaning that $\bbP^{\star}$ constructed in this way satisfies the original~\ref{eq:MSBP_appx}. 

Now, we want to derive the multi-marginal Schrödinger potential for~\ref{eq:MSBP_appx}. To do so, let us consider the Lagrangian formulation $\calL$ of~\ref{eq:MSBP_appx} which is given by:
\[\label{eq:lagrangian_appx}
    & \mathcal{L}(\bbP_{\calT}, \lambda_{\calT}) = \KL(\bbP_{\calT} | \bbQ_{\calT})  + \sum_{i=0}^{|\calT|} \int_{\bbR^d} \lambda_{t_i}(\bx_{t_i}) \left[ \int_{\bbR^{d \times (\calT - 1)}} \bbP_{\calT}(\bx_{\calT}) d\bx_{\calT^{(-i)}}- \rho_{t_i}(\bx_{t_i}) \right] d\bx_{t_i},
\]
where $\lambda_{\calT} = (\lambda_{0}, \lambda_{t_1}, \cdots, \lambda_{T})$ is the Lagrange multipliers where each $\lambda_{t_i}$ enforcing the marginal constraints at $t_i \in \calT$. By setting the first variation of $\mathcal{L}(\bbP_{\calT}, \lambda_{\calT})$ to zero, we obtain the optimality condition for the minimizer $\bbP^{\star}_{\calT}$. Let $\delta \bbP_{\calT}(\bx_{\calT})$ be a perturbation around $\bbP_{\calT}$. Consider the measure
\[\label{eq:perturbed measure_appx}
    \bbP^{\epsilon}_{\calT} = \bbP_{\calT} + \epsilon\delta \bbP_{\calT},
\]
and by plugging $\bbP^{\epsilon}_{\calT}$ in~\eqref{eq:perturbed measure_appx} into Lagrangian~\eqref{eq:lagrangian_appx} and compute the first variation of each term. Then, for the first term of RHS in~\eqref{eq:lagrangian_appx} we get:
\[
    \frac{d}{d\epsilon}\Big|_{\epsilon = 0} \KL(\bbP^{\epsilon}_{\calT} | \bbQ_{\calT}) &= \frac{d}{d\epsilon}\Big|_{\epsilon = 0} \int \log \frac{d\left(\bbP_{\calT} + \epsilon\delta \bbP_{\calT}\right)}{d\bbQ_{\calT}} d\left( \bbP_{\calT} + \epsilon\delta \bbP_{\calT} \right) \\
    & = \int d\delta\bbP_{\calT}\left( \log \frac{d\bbP_{\calT}}{d\bbQ_{\calT}}  + 1 \right).
\]
Moreover, for the second term of RHS in~\eqref{eq:lagrangian_appx} we get:
\[
     & \frac{d}{d\epsilon}\Big|_{\epsilon = 0} \sum_{i=0}^{|\calT|} \int_{\bbR^d} \lambda_{t_i}(\bx_{t_i}) \left[ \int_{\bbR^{d \times (\calT - 1)}} \left(\bbP_{\calT} + \epsilon \delta \bbP_{\calT}\right)(\bx_{\calT}) d\bx_{\calT^{(-i)}}- \rho_{t_i}(\bx_{t_i}) \right] d\bx_{t_i} \\
     & = \sum_{i=0}^{|\calT|} \int_{\bbR^d} \lambda_{t_i}(\bx_{t_i}) \left[ \int_{\bbR^{d \times (\calT - 1)}} \delta \bbP_{\calT} (\bx_{\calT}) d\bx_{\calT^{(-i)}} \right] d\bx_{t_i} \\
     & = \int_{\bbR^{d \times |\calT|}} \left[\sum_{i=0}^{|\calT|}  \lambda_{t_i}(\bx_{t_i})\right] d\delta \bbP_{\calT} (\bx_{\calT}). 
\]
Hence, we get the total first variation of $\calL(\bbP^{\epsilon}_{\calT}, \lambda_{\calT})$ as:
\[
    \delta \calL = \int d\delta\bbP_{\calT}\left( \log \frac{d\bbP_{\calT}}{d\bbQ_{\calT}}  + 1 + \left[\sum_{i=0}^{|\calT|}  \lambda_{t_i}(\bx_{t_i})\right]\right).
\]
For the optimality, $\delta \calL$ to be vanished for all admissible $\delta \bbP_{\calT}$, hence we get:
\[
    \log \frac{d\bbP_{\calT}}{d\bbQ_{\calT}}  + 1 + \left[\sum_{i=0}^{|\calT|}  \lambda_{t_i}(\bx_{t_i})\right] = 0.
\]
Therefore, we obtain the optimality condition:
 \[
    \bbP^{\star}_{\calT}(\bx_{\calT}) &= \bbQ_{\calT}(\bx_{\calT})\exp\left(-1 - \left[\sum_{i=0}^{|\calT|}  \lambda_{t_i}(\bx_{t_i})\right]\right) \label{eq:psi_0_appx} \\
    & \stackrel{(i)}{=} \prod_{i=1}^{|\calT|} \bbQ_{|t_{i-1}, t_i}(\bx_{t_{i-1}}, \bx_{t_i})\exp\left(\log \rho_0(\bx_0)-1 - \left[\sum_{i=0}^{|\calT|}  \lambda_{t_i}(\bx_{t_i})\right]\right) \\
    & \stackrel{(ii)}{=} \Psi_{0}(\bx_0)\prod_{i=1}^{|\calT|} \left[ \bbQ_{|t_{i-1}, t_i}(\bx_{t_{i-1}}, \bx_{t_i})\Psi_{t_i}(\bx_{t_i})\right] \label{eq:psi_appx}
 \]
where $(i)$ follows from $\bbQ_{\calT}(\bx_{\calT}) = \rho_0(\bx_0)\prod_{i=1}^k \bbQ_{|t_{i-1}, t_i}(\bx_{t_{i-1}}, \bx_{t_i})$ and $(ii)$ follows from defining
\begin{subequations}\label{eq:potential_appx}
    \begin{gather}
    \exp\left(\log \rho_0(\bx_0)-1 - \left[\sum_{i=0}^{|\calT|}  \lambda_{t_i}(\bx_{t_i})\right]\right) = \prod_{i=0}^{|\calT|} \Psi_{t_i}(\bx_{t_i}), \\
    \text{where} \; \Psi_0 = \exp\left( \log \rho_0(\bx_0) -1 - \lambda_0\right),\; \text{and} \; \Psi_{t_i} = \exp(-\lambda_{t_i}), \; \forall i>0. 
    \end{gather}
\end{subequations}

Observe that since $\bbP^{\star}_{\calT} = \Pi^{\star}_{\calT}$ is the optimal coupling, it follows from equation~\eqref{eq:reciprocal_optimal_couplig_appx} that the path measure factorizes as $d\bbP^{\star} = d\bbP^{\star}_{\calT} \cdot \bbQ^{\star}_{|\calT}$. Combining this with the result from equation~\eqref{eq:psi_0_appx}, we obtain the Radon–Nikodym derivative:
\[\label{eq:bridge factorization_appx}
    \frac{d\bbP^{\star}}{d\bbQ} = \frac{d\bbP^{\star}_{\calT}}{d\bbQ_{\calT}} \cdot \frac{d\bbP^{\star}_{|\calT}}{d\bbQ_{|\calT}} = \frac{d\bbP^{\star}_{\calT}}{d\bbQ_{\calT}} = \prod_{i=0}^{|\calT|} \Psi_{t_i}(\bx_{t_i}).
\] 
Moreover, due to the structure of the construction and by applying results such as those in~\citep[Theorem 2.10]{baradat2020minimizing}, the resulting measure $\bbP^{\star}$ is a Markov process.

Hence, from~\eqref{eq:psi_appx}, we deduce that the optimal coupling  $\bbP^{\star}_{\calT}$ factorized into a functions of $\bx_{t_i}$. Moreover, since the conditional measure $\bbQ_{|t_{i-1}, t_i}$ is reciprocal process, it satisfies time symmetry~\citep{léonard2014reciprocal} $\ie \ora\bbQ_{|t_{i-1}, t_i} = \ola\bbQ_{|t_{i}, t_{i-1}}$. Consequently, for any increasing time sequence $\ora\calT = \{t_0, \cdots, t_k\}$ and its reversed counterpart $\ola\calT = \{t_k, \cdots, t_0\}$, for a given $\bx_{\calT} \in \bbR^{d \times |\calT|}$, it holds
\[\label{eq:time-symmetry_appx}
\ora\bbQ_{|\ora\calT} = \prod_{i=1}^k \ora\bbQ_{|t_{i-1}, t_i} = \prod_{i=k}^1 \ola\bbQ_{|t_{i}, t_{i-1}}= \ola\bbQ_{|\ola\calT}.
\]
We now define the multi-marginal Schrödinger potentials in both the forward and backward directions.
\begin{theorem}[Multi-Marginal Schrödinger Potentials]\label{theorem:Multi-Marginal Schrödinger Potentials} For a finite set of timestamps $\calT$ and with corresponding potentials $\{\Psi_i\}_{i=0}^{|\calT|}$ defined in~\eqref{eq:potential_appx}, let us define a pair of potentials $(\ora\Psi, \ola\Psi)$:
    \[\label{eq:multi-marginal Schrödinger potentials}
        \ora\Psi(t, \bx) = \bbE_{\bbQ}\left[\prod_{j \geq \tau(t)}^{|\calT|} \Psi_j(\bX_{t_j}) | \bX_t = \bx \right], \quad \ola\Psi(t, \bx) = \bbE_{\bbQ}\left[\prod_{j < \tau(t)}^0 \Psi_j(\bX_{t_j}) | \bX_t = \bx \right],
    \]
where $\tau(t) = \min_{u}\{u \geq t |t \in \calT \}$. Then, for any $t \in [t_{j-1}, t_j]$,$(\ora\Psi, \ola\Psi)$ satisfy following PDEs:
    \[
        & \partial_t \ora\Psi + \tfrac{1}{2} \sigma^2 \Delta\ora\Psi = 0, \quad \partial_t \ola\Psi  - \tfrac{1}{2} \sigma^2 \Delta\ola\Psi = 0, \\
        & \rho_{t_j}(\bx) = \ora\Psi(t_j, \bx)\ola\Psi(t_j, \bx) = \textstyle{\lim_{t \uparrow t_j}}\ora\Psi(t, \bx)\ola\Psi(t, \bx),\label{eq:boundary_condition_1_appx}\\
    & \rho_{t_{j-1}}(\bx) = \ora\Psi(t_{j-1}, \bx)\ola\Psi(t_{j-1}, \bx) = \textstyle{\lim_{t \downarrow t_{j-1}}}\ora\Psi(t, \bx)\ola\Psi(t, \bx). \label{eq:boundary_condition_2_appx}
    \]
Moreover, the solution $\bbP^{\texttt{mSBP}}$ of~\ref{eq:MSBP_appx} is induced by following forward-backward SDEs:
\begin{subequations}
    \begin{gather}
        \hspace{-1mm}d\ora\bX_t = \sigma^2 \nabla_{\bx} \ora\Psi(t,\ora\bX_t)dt + \sigma d\bW_t, \quad \ora\bX_0 \sim \rho_0, \label{eq:forward mSBP_appx}\\
        d\ola\bX_t = \sigma^2 \nabla_{\bx} \ola\Psi(t,\ola\bX_t)dt + \sigma d\bW_t, \quad \ola\bX_0 \sim \rho_T. \label{eq:backward mSBP_appx}
    \end{gather}
\end{subequations}
\end{theorem}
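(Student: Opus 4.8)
The overarching strategy is to show that on each subinterval $[t_{j-1}, t_j]$ the restriction of the optimizer $\bbP^{\star}$ to $[t_{j-1}, t_j]$ is itself a classical two-marginal Schrödinger bridge between $\rho_{t_{j-1}}$ and $\rho_{t_j}$, so that Theorem~\ref{theorem:ynamics SB optimality} applies locally with $f \equiv 0$, and then to verify that the globally defined potentials in~\eqref{eq:multi-marginal Schrödinger potentials} restrict to the corresponding local Schrödinger potentials on each segment. The two facts that make this reduction possible are the coupling factorization~\eqref{eq:psi_appx} and the reciprocal property of Proposition~\ref{proposition:reciprocal process}, which together guarantee that the restriction of $\bbP^{\star}$ to a segment is a mixture of $\bbQ$-bridges pinned at the two adjacent marginals.

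First I would fix $t$ in the interior of an interval, $t \in (t_{j-1}, t_j)$, and note that $\tau(t) = t_j$ is constant there, so the defining product in~\eqref{eq:multi-marginal Schrödinger potentials} ranges over a fixed index set. Using the Markov property of $\bbQ$ and the tower rule, I would collapse the forward potential to a single heat-kernel integral against its value at the right node,
\[
\ora\Psi(t, \bx) = \bbE_{\bbQ}\!\left[\ora\Psi(t_j, \bX_{t_j}) \mid \bX_t = \bx\right] = \int \bbQ_{t_j \mid t}(\by \mid \bx)\, \ora\Psi(t_j, \by)\, d\by,
\]
and dually express $\ola\Psi$ as an integral of its value at $t_{j-1}$. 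Since $\bbQ$ is a driftless Brownian motion, $\bbQ_{t_j \mid t}(\by \mid \bx) = \calN(\by; \bx, \sigma^2 (t_j - t) I)$; differentiating under the integral and using that the Gaussian kernel is space-time harmonic yields the Kolmogorov backward equation $\partial_t \ora\Psi + \tfrac12 \sigma^2 \Delta \ora\Psi = 0$ on each open interval. For $\ola\Psi$ I would invoke the time-symmetry of the reciprocal reference in~\eqref{eq:time-symmetry_appx}: reversing time maps $\ola\Psi$ to a forward potential for the reversed (again Brownian) process, so that $\ola\Psi$ solves the adjoint equation $\partial_t \ola\Psi - \tfrac12 \sigma^2 \Delta \ola\Psi = 0$. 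The heat-kernel representation also makes each potential continuous up to the endpoints, which gives the one-sided limits in~\eqref{eq:boundary_condition_1_appx} and~\eqref{eq:boundary_condition_2_appx}.

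Next I would establish the nodal factorizations. Plugging $\bx_{t_j} = \bx$ into~\eqref{eq:psi_appx} and integrating out every other coordinate, the integral splits into a ``past'' block over $\bx_0, \dots, \bx_{t_{j-1}}$ and a ``future'' block over $\bx_{t_{j+1}}, \dots, \bx_{t_k}$, coupled only through $\bx_{t_j} = \bx$. By Chapman--Kolmogorov for $\bbQ$, the future block reassembles exactly $\ora\Psi(t_j, \bx)$ (carrying the single factor $\Psi_{t_j}(\bx)$) and the past block reassembles $\ola\Psi(t_j, \bx)$; since the full integral is the $t_j$-marginal of $\bbP^{\star}_{\calT}$, equal to $\rho_{t_j}$ by the constraint, this gives $\ora\Psi(t_j,\bx)\ola\Psi(t_j,\bx) = \rho_{t_j}(\bx)$, and the identical bookkeeping at $t_{j-1}$ gives the second boundary condition. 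Finally, for the SDEs~\eqref{eq:forward mSBP_appx}--\eqref{eq:backward mSBP_appx}, I would use that $\bbP^{\star}$ is Markov by~\citep[Theorem 2.10]{baradat2020minimizing} together with the Radon--Nikodym factorization~\eqref{eq:bridge factorization_appx}: on each segment $\bbP^{\star}$ is the Doob $h$-transform of $\bbQ$ by the space-time harmonic function $\ora\Psi$, whose harmonicity is precisely the backward heat equation above, and this recovers the forward drift in~\eqref{eq:forward mSBP_appx}; the backward SDE follows symmetrically from $\ola\Psi$.

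I expect the main obstacle to be the backward potential and the consistency of the gluing. Conditioning the past on the present is tractable only through the reversibility and time-symmetry of $\bbQ$, so verifying the adjoint heat equation for $\ola\Psi$ and re-deriving the ``seam'' bookkeeping in the reversed direction needs care; one must also check that the forward and backward local constructions agree at every interior node, so that the globally glued potentials are well defined and the composite drift is continuous there rather than exhibiting a kink. Everything else reduces to the standard Gaussian-kernel differentiation and Markov marginalization already used in the two-marginal case.
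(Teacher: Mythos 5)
Your proposal is correct in substance and reaches all the stated conclusions, but it takes a genuinely different route from the paper's proof for the analytic parts. For the boundary factorizations $\rho_{t_j} = \ora\Psi(t_j,\cdot)\ola\Psi(t_j,\cdot)$ your past/future block-splitting through the conditioning point via Chapman--Kolmogorov is essentially identical to the paper's computation (the paper actually carries it out for interior $t$ as well, obtaining $\rho_t = \ora\Psi(t,\cdot)\ola\Psi(t,\cdot)$ on the whole interval, with the limits then read off from how the factor $\Psi_j$ migrates between the two products at the seam). Where you diverge is in how the PDEs and SDEs are obtained: you derive $\partial_t\ora\Psi + \tfrac12\sigma^2\Delta\ora\Psi = 0$ directly from the tower-property representation $\ora\Psi(t,\bx) = \bbE_{\bbQ}[\ora\Psi(t_j,\bX_{t_j})\mid\bX_t=\bx]$ by differentiating the Gaussian kernel, and you get the forward SDE as a Doob $h$-transform of $\bbQ$ by the space-time harmonic function $\ora\Psi$. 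The paper instead proceeds variationally: it first invokes \citep[Theorem 4.19]{mohamed2021schrodinger} to identify $\KL(\bbP^{\star}|\bbQ)$ with the control cost of the drift $\sigma\nabla\log\ora\Psi$, deduces the forward SDE via Girsanov, obtains the Fokker--Planck equation for $\rho_t$, recovers the forward PDE by Feynman--Kac, derives the backward PDE by combining the FPE with the factorization $\rho_t=\ora\Psi\ola\Psi$, and finally gets the backward SDE from Nelson's relation. Your route is more elementary and self-contained for the forward direction; the paper's route has the advantage of delivering the optimal-control interpretation (and hence the link to the cost functional used later in the appendix) as a byproduct. Note also that the $h$-transform drift is $\sigma^2\nabla\log\ora\Psi$, which matches the paper's equation~\eqref{eq:drift_minimizer_appx} rather than the literal $\sigma^2\nabla\ora\Psi$ appearing in the theorem's SDEs; this is a typo inherited from Theorem~\ref{theorem:ynamics SB optimality}, and both your argument and the paper's in fact establish the logarithmic form.

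The one point you correctly flag as delicate is the only place where your argument (and, to be fair, the paper's) is loose: $\ola\Psi$ defined as a \emph{conditional} expectation of the past given $\bX_t=\bx$ carries a normalization by the $\bbQ$-marginal $q_t(\bx)$, and the object that literally satisfies the driftless adjoint equation $\partial_t\ola\Psi - \tfrac12\sigma^2\Delta\ola\Psi = 0$ and pairs with $\ora\Psi$ to give $\rho_t$ is the unnormalized kernel integral $q_t(\bx)\,\bbE_{\bbQ}[\,\cdot\mid\bX_t=\bx]$ (equivalently, $\rho_t/\ora\Psi$, which is how the paper actually derives the backward PDE). Your time-reversal argument goes through once $\ola\Psi$ is read in this unnormalized sense, so this is a matter of making the definition precise rather than a gap in the strategy.
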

\begin{proof}
We begin by proving the boundary conditions in~(\ref{eq:boundary_condition_1_appx}--\ref{eq:boundary_condition_2_appx}). Without loss of generality, we consider the case where $t \in [t_{j-1}, t_j)$. By invoking the factorization in~\eqref{eq:bridge factorization_appx}, we obtain:
\[
    \rho(t, \bx) &= \bbE_{\bbP^{\star}}\left[\delta(\bX_t = \bx) \right] = \int_{\Omega} \delta(\bX_t = \bx) d\bbP^{\star} \\
    & = \int_{\Omega} \delta(\bX_t = \bx) e^{-1 - \sum_{i=0}^{|\calT|}\lambda_{t_i}(\bx_{t_i})} d\bbQ \\
    & = \int_{\bbR^{d \times |\calT|}}  e^{-1 - \sum_{i=0}^{|\calT|}\lambda_{t_i}(\bx_{t_i})} d\bbQ_{t, \calT} \label{eq:marginal calculation_appx}
\]
To evaluate the marginal at time $t$, we consider an integral over the path space $\Omega$, slicing the trajectory into segments for $t$. Due to the Markovian $\bbQ$, as consequence of~\citep[Proposition 1.4]{léonard2014reciprocal} and time symmetry of bridge kernel~\eqref{eq:bridge factorization_appx}, we can express the joint distribution $\bbQ_{t, \calT}(\bx_t, \bx_{\calT})$:
\[\label{eq:joint distribution Q_t}
    \bbQ_{t, \calT} &= \rho_0(\bx_0)\prod_{i=1}^{j-1}\ora\bbQ_{|t_{i-1}, t_i} \prod_{i=j}^{|\calT|}\ola\bbQ_{|t_{i-1}, t_i} \left[\ora\bbQ_{|t_{j-1}, t} \ola\bbQ_{|t, t_j}\right].
\]
Hence, substituting $\bbQ_{t, \calT}$ in~\eqref{eq:joint distribution Q_t} into~\eqref{eq:marginal calculation_appx}, we get\footnote{We will omit the arrow above the character, as it can be inferred from the subscript.}:
\[
    \rho(t, \bx) &= \int_{\bbR^{d \times |\calT|}}  e^{\log \rho_0(\bx_0)-1 - \sum_{i=0}^{|\calT|}\lambda_{t_i}(\bx_{t_i})} \prod_{i=1}^{j-1}\bbQ_{|t_{i-1}, t_i} \prod_{i=j}^{|\calT|}\bbQ_{|t_{i-1}, t_i} \left[\bbQ_{|t_{j-1}, t} \bbQ_{|t, t_j}\right] d\bx_{\calT} \\
    & = \int_{\bbR^{d \times |\calT|}}  \prod_{i=0}^{|\calT|}\Psi_{t_i}(\bx_{t_i}) \prod_{i=1}^{j-1}\bbQ_{|t_{i-1}, t_i} \prod_{i=j}^{|\calT|}\bbQ_{|t_{i-1}, t_i} \left[\bbQ_{|t_{j-1}, t} \bbQ_{|t, t_j}\right] d\bx_{\calT} \\
    & =  \ora\Psi(t, \bx) \ola\Psi(t, \bx).
\]
Moreover, limiting procedures in~(\ref{eq:boundary_condition_1_appx}-\ref{eq:boundary_condition_2_appx}) follows directly from the definition of~\eqref{eq:multi-marginal Schrödinger potentials}: as time approaches the boundary $t_i$, the potential component $\Psi_i$, previously contained within the conditional expectation $\ora\Psi$ (or $\ola\Psi$), transitions smoothly into the complementary potential $\ola\Psi$ (or $\ora\Psi$) associated with the opposite time direction. This ensures the resulting density $\rho_t(\cdot) \in C^{1,2}([0, T], \bbR^d)$.

Now, as a consequence of~\citep[Theorem 4.19]{mohamed2021schrodinger} for a unique minimizer of~\ref{eq:MSBP} (if it exists), we get
\[\label{eq:drift_minimizer_appx}
\KL(\bbP^{\star}|\bbQ) = \frac{1}{2}\bbE_{\bbP^{\star}}\left[\int_0^T \norm{\sigma \nabla \log \ora \Psi(t, \ora\bX_t)}^2 dt \right].
\]
In other words, by applying the Girsanov theorem~\citep[Theorem 8.6.5]{oksendal2010stochastic}, it implies that the forward SDEs in~\eqref{eq:forward mSBP_appx} induce the solution $\bbP^{\star}$ to~\ref{eq:MSBP}. Consequently, the associated marginal density $\rho$ satisfies the Fokker-Planck equation:
\[\label{eq:fokker-planck-equation_appx}
    \partial_t \rho - \nabla \cdot \left( \rho (\sigma^2 \nabla \ora\Psi) \right) - \frac{1}{2}\sigma^2 \Delta \rho = 0, \quad \rho_t(\bx) = \ora\Psi_t(\bx) \ola\Psi_t(\bx), \quad \forall t \in \calT.
\]
Furthermore, by applying the Feynman-Kac formula~\citep[Theorem 10.5]{baldi2017stochastic}, we obtain the PDE satisfied by $\ora\Psi$ as characterized in~\citep[Appendix B.4]{park2025amortized}:
\[\label{eq:forward-feynman-kac-formula_appx}
    \partial_t \ora\Psi + \tfrac{1}{2} \sigma^2 \Delta\ora\Psi = 0.
\]
Combining equations~(\ref{eq:fokker-planck-equation_appx}--\ref{eq:forward-feynman-kac-formula_appx}) and using the factorization $\rho_t(\bx) = \ora\Psi_t(\bx) \ola\Psi_t(\bx)$,  we deduce, following the derivation~\citep[Appendix A.4.1]{liu2022deep}, that $\ola\Psi$ satisfies the backward PDE:
\[\label{eq:back-feynman-kac-formula_appx}
\partial_t \ola\Psi  - \tfrac{1}{2} \sigma^2 \Delta\ola\Psi = 0.
\]
Invoking Nelson's relation~\citep{nelson1967dynamical}, we recover the backward dynamics described in~\eqref{eq:backward mSBP_appx}, which likewise induce the optimal path measure $\bbP^{\star}$. This concludes the proof.
\end{proof}

\begin{remark}
The IPF-type iteration~\eqref{eq:IPF objective} is expected to remain a valuable tool for approximating the multi-marginal potentials presented in~\Cref{theorem:Multi-Marginal Schrödinger Potentials}. Nevertheless, the challenge of enforcing boundary conditions for intermediate states, as specified in equations~\eqref{eq:boundary_condition_1_appx} and~\eqref{eq:boundary_condition_2_appx}, might necessitate more advanced optimization strategies comparable to those developed in MSBM.
\end{remark}

Finally, combining~(\ref{eq:drift_minimizer_appx}-\ref{eq:back-feynman-kac-formula_appx}), we obtain the stochastic optimal control formulation of~\ref{eq:MSBP}:
\[
    &\min_{\alpha \in \bbA} \calJ(\alpha) = \bbE_{\bbQ^{\alpha}} \left[ \int_0^T \frac{1}{2}\norm{\alpha_t}^2 dt\right], \label{eq:cost functional_appx} \\
    & \text{subject to} \; d\bX^{\alpha}_t = \sigma \alpha_t dt + \sigma d\bW_t, \quad \bX^{\alpha}_t \sim \rho_t, \forall t \in \calT, \label{eq:controlled SDE_appx}
\]
where $\bbA$ denotes the family of finite-energy controls adapted to the filtration generated by $\bW_t$, satisfying $\bbE_{\bbQ^{\alpha}}\left[\textstyle{\int_0^T}||\alpha_t||^2,dt\right]<\infty$. Here, $\bbQ^{\alpha}$ is the path measure induced associated with~\eqref{eq:controlled SDE_appx}.

Since the cost functional~\eqref{eq:cost functional_appx} can also be derived using Girsanov's theorem with reference measure $\bbQ$ associated to the uncontrolled SDE $d\bX_t=\sigma d\bW_t$, we conclude that the expression in~\eqref{eq:drift_minimizer_appx} provides the optimal cost, $\textstyle{\ie \min \calJ = \frac{1}{2}\bbE_{\bbP^{\star}}\left[\int_0^T \norm{\sigma \nabla \log \ora \Psi(t, \ora\bX_t)}^2 dt \right]}$. Consequently, we identify the optimal control for the problem~\eqref{eq:cost functional_appx} as the Markovian feedback control $\alpha^{\star}:=\sigma\nabla\log\ora\Psi$. Furthermore, this optimal control ensures that the marginal constraints in~\eqref{eq:controlled SDE_appx} are satisfied, as implied by~\eqref{eq:fokker-planck-equation_appx}.

\section{Proofs and Derivations}

\subsection{Proof of Proposition~\ref{proposition:reciprocal process}}
\propone*
\begin{proof} Let us consider a Markov measure $\bbQ$. Then following factorzation holds for $\bbQ$~\citep[Definition 2.2]{baradat2020minimizing} for any events $A_i \in \sigma(\bX_{[t_{i-1}, t_i]})$ for all $i \in [1:k]$:
\[\label{eq:markov_factorziation_appx}
    \bbQ\left(\cap_{i=1}^k A_i \mid \bX_{\calT}\right) 
    = \bbQ_{|0, t_1}(A_1 \mid \bX_0, \bX_{t_1})\bbQ_{|t_1, t_2}(A_2 \mid \bX_{t_1}, \bX_{t_2}) \cdots \bbQ_{|t_{k-1}, T}(A_k \mid \bX_{t_{k-1}}, \bX_{T}).
\]
Without loss of generality, consider $t \in [t_{i-1}, t_i)$. To isolate the conditional distribution of $\bX_t$ given endpoints $\bX_{t_{i-1}}$ and $\bX_{t_i}$, choose events as follows:
\[\label{eq:choice of event_appx}
A_j = 
\begin{cases}
\Omega, & j \neq i, \\[6pt]
\{\bX_t \in B\}, \quad B \in \sigma(\bX_{t}), & j = i.
\end{cases}
\]
Then, substituting~\eqref{eq:choice of event_appx} into the factorization in~\eqref{eq:markov_factorziation_appx}, we obtain:
\[
\bbQ_{|\calT}(\bX_t \in B \mid \bX_{\calT}) = \bbQ_{|t_{i-1}, t_i}(\bX_t \in B \mid \bX_{t_{i-1}}, \bX_{t_i}).
\]
Since $B \in \sigma(\bX_{t})$ was chosen arbitrarily, this implies that:
\[
\bbQ_{|\calT}(\bX_t \mid \bX_{\calT}) = \bbQ_{|t_{i-1}, t_i}(\bX_t \mid \bX_{t_{i-1}}, \bX_{t_i}), \quad t \in [t_{i-1}, t_i).
\]
Now, by disintegration of the path measure, we have
\[
    \bbQ(\cdot) &= \int_{\bbR^{d \times |\calT|}} \bbQ_{|\calT}(\cdot |\bX_{\calT}) d\bbQ_{\calT}(\bX_{\calT}) = \int_{\bbR^{d \times |\calT|}} \prod_{i=1}^k \bbQ_{|t_{i-1}, t_i}(\cdot |\bX_{t_{i-1}, t_i}) d\bbQ_{\calT}(\bX_{\calT}), \bbQ\text{-a.e.}
\]
It implies that for any $\bbP_{\calT} \in \calP_{\calT}$, we get $\bbP_{\calT}\bbQ_{|\calT} = \bbP_{\calT} \prod_{i=1}^k \bbQ_{|t_{i-1}, t_i}$. 


\end{proof}

\subsection{Proof of Proposition~\ref{proposition:Multi-Marginal Markovian Projection}}

\proptwo*
\begin{proof} Let $\mathcal{T}_i = \{t_{i-1}, t_i\}$ denote the set of two consecutive boundary time points,$\mathcal{T}_{<i} = \{0, \dots, t_{i-2}\}$ and $\mathcal{T}_{>i} = \{t_{i+1}, \dots, T\}$ represent the set of all time points preceding and following the interval $\calT_i$, respectively. Then, for $t \in [t_{i-1}, t_{i})$, we have:
\[
 \partial_t \rho_t & =  
  \partial_t \int \bbQ_{|\calT}(\bx_t) d\Pi_{\calT}(\bx_{\calT}) \\
  & = \partial_t \int \frac{\bbQ_{t, \calT}(\bx_t, \bx_{\calT})}{\bbQ_{\calT}(\bx_{\calT})} d\Pi_{\calT}(\bx_{\calT}) \\
  & = \partial_t \int \frac{\bbQ_{t, \calT_{<i}, \calT_{>i}|\calT_i}(\bx_t, \bx_{\calT_{<i}}, \bx_{\calT_{>i}} | \bx_{\calT_i}) \bbQ_{\calT_i}(\bx_{\calT_i})}{\bbQ_{\calT}(\bx_{\calT})} d\Pi_{\calT}(\bx_{\calT}) \\
  & \stackrel{(i)}{=}  \partial_t \int \frac{\bbQ_{\calT_{<i}, \calT_{>i}|\calT_i}(\bx_{\calT_{<i}}, \bx_{\calT_{>i}} | \bx_{\calT_i}) \bbQ_{t | \calT_i}(\bx_t | \bx_{\calT_i}) \bbQ_{\calT_i}(\bx_{\calT_i})}{\bbQ_{\calT}(\bx_{\calT})} d\Pi_{\calT}(\bx_{\calT})  \\
   & = \partial_t \int \frac{\bbQ_{\calT}(\bx_{\calT})\bbQ_{t|\calT_i}(\bx_t|\bx_{\calT_i})}{\bbQ_{\calT}(\bx_{\calT})} d\Pi_{\calT}(\bx_{\calT})  \\
 & = \partial_t \int  \bbQ_{t|\calT_i}(\bx_t|\bx_{\calT_i}) d\Pi_{\calT}(\bx_{\calT}) \\
 & = \int   \partial_t   \bbQ_{t|\calT_i}(\bx_t|\bx_{\calT_i})  d\Pi_{\calT}(\bx_{\calT}) \\
 & = \int \left[-\nabla \cdot (v_{t|\calT_i}(\bx_t|\bx_{\calT_i}) \rho_{t|\calT_i}(\bx_t|\bx_{\calT_i})) + \tfrac{1}{2}\sigma^2\Delta \rho_{t|\calT_i}(\bx_t | \bx_{\calT_i})  \right]d\Pi_{\calT}(\bx_{\calT})\\
 & = -\nabla \cdot \int v_{t|\calT_i}(\bx_t|\bx_{\calT_i}) \rho_{t|\calT_i}(\bx_t|\bx_{\calT_i})d\Pi_{\calT}(\bx_{\calT}) + \tfrac{1}{2}\sigma^2\Delta \int \rho_{t|\calT_i}(\bx_t | \bx_{\calT_i}) d\Pi_{\calT}(\bx_{\calT}) \\
 & \stackrel{(ii)}{=} -\nabla \cdot (v^i_t(\bx_t)\rho_t(\bx_t)) + \tfrac{1}{2}\sigma^2\Delta p_t(\bx_t),
\]

where $(i)$ follows from the piece-wise reciprocal property of $\Pi$ for each interval $[t_{i-1}, t_{i})$ in~\eqref{eq:markov_factorziation_appx}, and $(ii)$ follows by defining:
\[
    v^i_t(\bx_t) &= \frac{\int  v_{t|\calT_i}(\bx_t|\bx_{\calT_i}) \rho_{t|\calT_i}(\bx_t|\bx_{\calT_i}) d\Pi_{\calT}(\bx_{\calT})}{\rho_t(\bx_t)} \\
    & = \int v_{t|\calT_i}(\bx_t|\bx_{\calT_i}) \frac{\rho_{t|\calT_i}(\bx_t)}{\rho_t(\bx_t|\bx_{\calT_i})} d\Pi_{\calT_i}(\bx_{\calT_i}) \\
    & = \int v_{t|\calT_i}(\bx_t|\bx_{\calT_i}) d\Pi_{t_{i}|t}(\bx_{t_{i}}|\bx_t) \\
    & = \bbE_{\Pi_{t_{i}|t}} \left[  \nabla_{\bx_t} \log \bbQ_{t_{i}|t}(\bX_{t_{i}}|\bX_t) | \bX_t = \bx_t \right].
\]
Hence, for arbitrary $t \in [0, T] - \calT$, we get the desired expression:
\[
    v^{\star}(t, \bx) &=  \textstyle{\sum_{i=1}^{|\calT|}} \bbE_{\Pi_{t_{i}|t}} v^i(t, \bx) \mb{1}_{[t_{i-1}, t_i)}(t) \\
    &= \textstyle{\sum_{i=1}^{|\calT|}} \bbE_{\Pi_{t_{i}|t}} \left[  \nabla_{\bx_t} \log \bbQ_{t_{i}|t}(\bX_{t_{i}}|\bX_t) | \bX_t = \bx_t \right] \mb{1}_{[t_{i-1}, t_i)}(t)
\]
Moreover, since the measures induced by the SDEs with drifts $\{v^i\}_{i \in [1:k]}$ for each local interval $\{[t_{i-1}, t_i)\}$ share their marginal distributions at each boundary time points $t \in \calT$, the SDEs in~\eqref{eq:Multi-Marginal Markovian Projection} form a Markov process. Consequently, we obtain the associated FPE with marginal constraints specified by the prescribed distributions $\{\rho_t\}_{t \in \calT}$, which can be constructed using $v^{\star}$:
\[
    \partial_t \rho_t = -\nabla \cdot (v^{\star}_t(\bx_t) \rho_t(\bx_t)) + \tfrac{1}{2}\Delta \rho_t(\bx_t) = 0, \quad \rho_{t} = 
    \rho_{t}, \quad \forall t \in \cal T.
\]
This completes the proof. 
\end{proof}

\subsection{Proof of~\Cref{proposition:uniquness}}

Our results are based on~\citep{baradat2020minimizing, mohamed2021schrodinger}, in a manner analogous to the SBM proof for~\ref{eq:SBP} in~\citep[Appendix C.3]{shi2024diffusion}, which was based on~\citep{leonard2013survey}, and serve as a natural extension to the multi-marginal setting.

\propthree*

\begin{proof}

Below, we proof that; \textbf{(A)}: If some measure $\bbP^{\star} \in \calP_{[0,T]}$ satisfying the Radon-Nikodym derivative in~\eqref{eq:bridge factorization_appx}, then it is a Markov process and reciprocal class of $\bbQ$ satisfying marginal constraints $\{\rho_t\}_{\calT}$; \textbf{(B)}: If the unique solution $\bbP^{\texttt{mSBP}}$ of~\ref{eq:MSBP} exists, then it will has Radon-Nikodym derivative in~\eqref{eq:bridge factorization_appx}; \textbf{(C)}: If some measure $\bbP^{\star} \in \calP_{[0, T]}$ satisfying the Radon-Nikodym derivative in~\eqref{eq:bridge factorization_appx}, then it is unique solution $\bbP^{\texttt{mSBP}}$ of~\ref{eq:MSBP}.

\textbf{(A)} Previously, we established that a solution $\bbP^{\star}$ of~\ref{eq:MSBP} possesses a Radon-Nikodym derivative with respect to the reference measure $\bbQ$ of the product form $\tfrac{d\bbP^{\star}}{d\bbQ} = \prod_{i=0}^{|\calT|} \Psi_{t_i}(\bx_{t_i})$ as in ~\eqref{eq:bridge factorization_appx}. By combining the results in~\citep[Theorem 2.10]{baradat2020minimizing}, since $\prod_{i=0}^{|\calT|} \Psi_{t_i}(\bX_{t_i})$ is $\sigma(\bX_{\calT})$-measurable, it is a Markov process. Moreover, it satisfies $\bbP^{\star} = \int \bbQ_{|\calT} d\bbP^{\star}_{\calT}$ in~\eqref{eq:reciprocal_optimal_couplig_appx},  it concluded that $\bbP^{\star}$ reciprocal class of $\bbQ$ $\ie$ $ \bbP^{\star} = \calR^{\texttt{mm}}(\bbP^{\star}, \calT)$. 

\textbf{(B)} Our goal is to verify that $\bbP^{\star}$ is indeed the \emph{unique} solution the the~\ref{eq:MSBP}. By combining the results in~\citep[Theorem 4.5]{baradat2020minimizing}, (if it exists) the unique solution of $\bbP^{\texttt{mSBP}}$ is a Markov process and admit following Radon-Nikodym derivative with respect to $\bbQ$: \[
\frac{d\bbP^{\texttt{mSBP}}}{d\bbQ} = \exp\left(\bbA[0, 1]\right),
\]
where $\bbA[0, 1]$ is $\sigma(\bX_{\calT})$-measurable function. Again, since the product form $\prod_{i=0}^{|\calT|} \Psi_{t_i}(\bx_{t_i})$ is $\sigma(\bX_{\calT})$-measurable, it states that the solution we found is indeed a unique solution $\bbP^{\texttt{mSBP}}$. 

\textbf{(C)} Let us consider the set of measures satisfying the multi-marginal constraints:
\[
    \calC_{\calT} = \{\bbP \in \calP_{[0, T]}: (\bX_{t})_{\#} \bbP = \rho_{t}, \; \forall \; t \in \calT \}.
\]
By leveraging results in~\citep[Theorem 2.6]{mohamed2021schrodinger}, under mild condition, the unique minimizer $\bbP^{\star} \in \calC_{\calT}$ of~\ref{eq:MSBP} has a Radon-Nikodyn derivative that can be written as $\tfrac{d\bbP^{\star}}{d\bbQ} = \prod_{i=0}^{|\calT|} \Psi_{t_i}(\bx_{t_i})$ as in~\eqref{eq:bridge factorization_appx}.

Combining the arguments from parts $\textbf{(A-C)}$, we establish the following: the unique minimizer $\bbP^{\star} \in \calC_{\calT}$ for~\ref{eq:MSBP} is characterized if and only if it is a Markov process that belongs to the reciprocal class of $\bbQ$. It concludes the proof.

\end{proof}

\subsection{Proof of~\Cref{proposition:convergence}}

Our proof builds upon the work of~\citep{peluchetti2023diffusion, shi2024diffusion}. Standard SBM convergence relies on the Pythagorean property of (reverse) KL-divergence and compactness of the set $\{\bbP \in \calP_{[0, T]} : \KL(\bbP | \bbP^{\texttt{mSBP}}) \leq \KL(\bbP^{(0)} | \bbP^{\texttt{mSBP}}) \}$ . Therefore, if our proposed multi-marginal projection operators, $\calR^{\texttt{mm}}$ and $\calM^{\texttt{mm}}$, also satisfy a Pythagorean law analogous to those in~\citep{peluchetti2023diffusion, shi2024diffusion}, then their convergence analysis can be directly applied to our multi-marginal scenario.

\propfour*

\begin{proof}
Let $\Pi := \mathcal{R}^{\texttt{mm}}(\mathbb{P}, \mathcal{T})$ denote the multi-marginal reciprocal projection of a path measure $\mathbb{P}$, and let $\mathbb{M} := \mathcal{M}^{\texttt{mm}}(\Pi, \mathcal{T})$ be the subsequent multi-marginal Markovian projection of $\Pi$.
As established in Proposition~\ref{proposition:Multi-Marginal Markovian Projection}, the marginal distributions match at each time point, i.e., $\Pi_t = \mathbb{M}_t$ for all $t \in [0, T]$, and specifically at the initial time, $\Pi_0 = \mathbb{M}_0 = \rho_0$.
Following the principles outlined by \citep[pp. 37-38]{peluchetti2023diffusion}, we can establish a Pythagorean law for the KL-divergences~\citep{csiszar1965divergence} for these multi-marginal projections $\mathcal{R}^{\texttt{mm}}$ and $\mathcal{M}^{\texttt{mm}}$. For any path measure $\mathbb{P} \in \mathcal{P}_{[0, T]}$:
\[ D_{\text{KL}}(\Pi | \mathbb{P}) = D_{\text{KL}}( \Pi | \mathbb{M}) + D_{\text{KL}}( \mathbb{M} | \mathbb{P}). \]
If we choose $\mathbb{P} = \mathbb{P}^{\texttt{mSBP}}$, the unique solution to the~\ref{eq:MSBP}, the Pythagorean law implies the inequality:
\[ D_{\text{KL}}(\Pi | \mathbb{P}^{\texttt{mSBP}}) \geq D_{\text{KL}}( \mathbb{M} | \mathbb{P}^{\texttt{mSBP}}), \]
where equality holds if and only if $\Pi = \mathbb{M}$.
Furthermore, as proven in Proposition~\ref{proposition:uniquness}, $\mathbb{P}^{\texttt{mSBP}}$ is a Markov process within the set of measures $\mathcal{C}_{\mathcal{T}}$ (satisfying the marginal constraints at times $\mathcal{T}$) and belongs to the reciprocal class of the reference measure $\mathbb{Q}$.
Consequently, the condition $\Pi = \mathbb{M}$ is met if and only if both $\Pi$ and $\mathbb{M}$ are equal to the unique solution $\mathbb{P}^{\texttt{mSBP}}$.

Now, consider an iterative process for $n \geq 0$. Let $\mathbb{P}^{(n-1)}$ be a Markovian. Define the reciprocal projection $\Pi^{(n)} = \mathcal{R}^{\texttt{mm}}(\mathbb{P}^{(n-1)}, \mathcal{T})$.
Through the disintegration of the path measure, we have that:
\[ 
\KL(\Pi^{(n)} | \mathbb{P}^{\texttt{mSBP}}) &= \KL(\Pi^{(n)}_{\mathcal{T}} | \mathbb{P}^{\texttt{mSBP}}_{\mathcal{T}}) + \int_{\mathbb{R}^{d \times |\mathcal{T}|}} \KL(\Pi^{(n)}_{|\mathcal{T}} | \mathbb{P}^{\texttt{mSBP}}_{|\mathcal{T}} ) d\Pi^{(n)}_{\mathcal{T}} \\
& \stackrel{(i)}{=} \KL(\Pi^{(n)}_{\mathcal{T}} | \mathbb{P}^{\texttt{mSBP}}_{\mathcal{T}}),
\]
where $(i)$ follows because the reciprocal projection $\mathcal{R}^{\texttt{mm}}$ ensures that the resulting conditional path measure $\Pi^{(n)}_{|\mathcal{T}}$ is a mixture of bridges between adjacent marginals, identical to that of $\mathbb{P}^{\texttt{mSBP}}_{|\mathcal{T}}$ (as described in relation to \eqref{eq:reciprocal_optimal_couplig_appx}), i.e., $\Pi^{(n)}_{|\mathcal{T}} = \mathbb{P}^{\texttt{mSBP}}_{|\mathcal{T}} = \mathbb{Q}_{|\mathcal{T}}$.
Next, let $\mathbb{M}^{(n)} = \mathcal{M}^{\texttt{mm}}(\Pi^{(n)},\mathcal{T})$ be the Markovian projection of $\Pi^{(n)}$. The KL divergences between $\bbM^{(n)}$ and $\bbP^{\texttt{mSBP}}$ becomes:
\[ \KL(\mathbb{M}^{(n)} | \mathbb{P}^{\texttt{mSBP}}) &= \KL(\mathbb{M}^{(n)}_{\mathcal{T}} | \mathbb{P}^{\texttt{mSBP}}_{\mathcal{T}}) + \int_{\mathbb{R}^{d \times |\mathcal{T}|}} \KL(\mathbb{M}^{(n)}_{|\mathcal{T}} | \mathbb{P}^{\texttt{mSBP}}_{|\mathcal{T}} ) d\mathbb{M}^{(n)}_{\mathcal{T}} \\ & \stackrel{(ii)}{\geq} \KL(\Pi_{\mathcal{T}}^{(n+1)}| \mathbb{P}^{\texttt{mSBP}}_{\mathcal{T}}),
\]
where $\Pi^{(n+1)} = \mathcal{R}^{\texttt{mm}}(\mathbb{M}^{(n)}, \mathcal{T})$, and $(ii)$ is stated to follow from the IMF iteration as per \eqref{eq:IPF objective}.
This implies the desired intermediate result for the convergence argument:
\[ \KL(\mathbb{M}^{(n)} | \mathbb{P}^{\texttt{mSBP}}) \geq \KL(\Pi^{(n+1)}| \mathbb{P}^{\texttt{mSBP}}). \]
Consequently, under the assumption that the relevant KL divergences (such as $D_{\text{KL}}(\Pi | \mathbb{M})$) are finite, the convergence proof presented by \citep[Appendix C.6]{shi2024diffusion} can be directly adapted to our multi-marginal case, given our construction of $\mathcal{M}^{\texttt{mm}}$ and $\mathcal{R}^{\texttt{mm}}$. This concludes the proof.
\end{proof}

\subsection{Proof of~\Cref{corollary:Gluing Schrödinger Bridges}}

\corone*

\begin{proof}

Consider local forward SBs $\ora\bbP^i$ (governed by control $v^i$) and local backward SBs $\ola\bbP^i$ (governed by $u^i$) on intervals $[t_{i-1}, t_i]$.
Global forward path measure $\ora\bbP = \rho_0 \prod_{i=1}^k \ora\bbP^i_{|{t_{i-1}}}$ and global backward path measure $\ola\bbP =\rho_T \prod_{i=1}^k \ola\bbP^i_{|{t_{i}}}$ are constructed by sequentially composing these local SBs, starting from the initial distribution $\rho_0$ and terminal distribution $\rho_T$, respectively.
By this construction, $\ora\bbP$ and $\ola\bbP$ inherently satisfy all specified marginal constraints $\{\rho_t\}_{t \in \mathcal{T}}$ and belong to the reciprocal class of the reference measure $\bbQ$. The Markov property and absolute continuity constraint $\KL(\ora\bbP | \bbQ) <\infty$ or $\KL(\ola\bbP | \bbQ) <\infty$ for these global path measures $\ora\bbP$ and $\ola\bbP$ hinges on the continuity of their sample paths $\bX^{\star}_t$.
This path continuity is achieved if the composite global controls $v^{\star}$ in~\eqref{eq:forward MSBM} and $u^{\star}$ in~\eqref{eq:backward MSBM} are continuous across the entire time horizon $[0,T]$, ensuring seamless transitions at each intermediate time $t_i$ $\ie$ $\lim_{t \uparrow t_i} v^i(t, \bx) = v^{i+1}(t, \bx)$ and $\lim_{t \downarrow t_{i-1}} u^i(t, \bx) = u^{i-1}(t, \bx)$ for all $i \in [1:k]$. With continuous sample paths, $\ora\bbP$ and $\ola\bbP$ are indeed Markov processes satisfying the absolute continuity condition with respect to $\bbQ$. Given Proposition~\ref{proposition:uniquness}, which states that a Markov process satisfying all marginal constraints and belonging to the reciprocal class of $\bbQ$ is the unique solution to~\ref{eq:MSBP}. It concludes the proof.
\end{proof}

\section{Experimental Details}\label{sec:experimental details}

Our evaluation of MSBM involved several datasets and followed established experimental protocols from baseline methods to ensure fair comparisons. For the petal dataset and the 100-dimensional EB~\citep{moon2019visualizing}, we adopted the experimental setup of DMSB~\citep{chen2023deep}\footnote{\url{https://github.com/TianrongChen/DMSB}, under MIT license}. The processed data for these experiments were inherited from TrajectoryNet~\citep{tong2020trajectorynet}\footnote{\url{https://github.com/KrishnaswamyLab/TrajectoryNet}, Non-Commercial License Yale Copyright}. To maintain a fair comparison with DMSB, we parameterized both the forward ($v$) and backward ($u$) controls using the residual-based networks described in~\citep{chen2023deep}, ensuring a similar total model parameter count of approximately $1.28$M for these two datasets. For the 100-dimensional EB dataset, we further split the entire dataset into training and testing subsets with an 85\%/15\% ratio, respectively.

Regarding the hESC~\citep{chu2016single}, our experiments mirrored the setup of SBIRR~\citep{shen2024multi}\footnote{\url{https://github.com/YunyiShen/SB-Iterative-Reference-Refinement}} and inherited the processed data therein. We based the parameterization of our model on the network architecture used for the petal and 100-dim EB datasets. For a consistent comparison on the hESC dataset, we maintained a model size of approximately $24$k total parameters.

For the 5-dimensional EB dataset, we followed the experimental protocol of NLSB~\citep{koshizuka2022neural}\footnote{\url{https://github.com/take-koshizuka/NLSB}, under MIT license}. We inherited processed data from TrajectoryNet~\citep{tong2020trajectorynet}. In this case, the forward ($v$) and backward ($u$) controls were also parameterized with the residual-based networks described in~\citep{chen2023deep}.

For the CITE and MULTI datasets, we followed the experimental protocol of OT-CFM and the forward ($v$) and backward ($u$) controls are parameterized in the same way as in 5-dim EB case.

Across all experiments, models were trained using the Adam optimizer~\citep{kingma2014adam} with Exponential Moving Average (EMA) applied at a decay rate of 0.999.
The proposed MSBM training procedure (detailed in \Cref{alg:MSBM}) involved $N$ outer iterations, with each outer iteration containing $S$ inner training steps. Cached marginal distributions were updated in each outer iteration.
Impressively, the complete MSBM training for all datasets was accomplished in less than one hour using a single NVIDIA A6000 GPU excetp CITE and MULTI datasets, where we use a single NVIDIA RTX 3090. The remaining training hyper-parameters are detailed in Table~\ref{table:training_hyperparameters}.

\begin{table}[t]
\caption{Training Hyper-parameters}
\label{table:training_hyperparameters}
\centering
\scriptsize
\begin{tabular}{l|cccccc}
\toprule
\textbf{Dataset} & \textbf{Learning Rate} & \textbf{Iteration (N)} & \textbf{Training step  (S)} &\textbf{\# of Discretization} & \textbf{Batch Size} & $T$ \\ 
\midrule
Petal   & $1 \times 10^{-3}$ & 20 & 1000 & 30 & 256 & 4 \\
hESC & $1 \times 10^{-3}$ & 100 & 1000 & 30 & 256 & 5\\
EB(100-dim) & $2 \times 10^{-4}$ & 10 & 1000 & 100  & 256 & 4\\
EB(5-dim) & $2 \times 10^{-4}$& 3 & 50000 & 100 & 256 & 4 \\
CITE(5-dim) & $1 \times 10^{-4}$& 50 & 2000 & 100 & 256 & 3 \\
CITE(100-dim) & $1 \times 10^{-4}$& 50 & 2000 & 100 & 256 & 3 \\
MUILTI(5-dim) & $1 \times 10^{-4}$& 50 & 2000 & 100 & 256 & 3 \\
MULTI(5-dim) & $1 \times 10^{-4}$& 50 & 2000 & 100 & 256 & 3 \\
\bottomrule
\end{tabular} 
\end{table}

Additionally, the tables below present the complete results from experiments conducted three times, each using a different random seed.

\begin{table}[h!]
    \vspace{-4.5mm}
    \scriptsize
    \centering
    \captionof{table}{Full results over $3$ different seeds. Performance on the 100-dim PCA of EB dataset. MMD and SWD are computed between test $\rho^{\texttt{te}}_{t_i}$ and generated $\hat{\rho}_{t_i}$ by simulating the dynamics from test $\rho^{\texttt{te}}_{t_0}$.}
    \setlength{\tabcolsep}{1.0pt}
    \begin{tabular}{l|cccc|cccc}
        \toprule
         & \multicolumn{4}{c}{MMD ↓} & \multicolumn{4}{c}{SWD ↓} \\
        \cmidrule(lr){2-5} \cmidrule(lr){6-9}
        Methods & \texttt{Full} & ${t_1}$ & ${t_2}$ & ${t_3}$ & \texttt{Full} &  ${t_1}$ & ${t_2}$ & ${t_3}$\\
        \midrule
        NLSB$^\dag$~\citep{koshizuka2022neural} & $0.66$ & $0.38$ & $0.37$ & $0.37$ & $0.54$  & $0.55$ & $0.54$ & $0.55$ \\
        MIOFlow$^\dag$~\citep{huguet2022manifold} & $0.23$ & $0.23$ & $0.90$ & $0.23$ & $0.35$ & $0.49$ & $0.72$ & $0.50$ \\
        DMSB$^\dag$~\citep{chen2023deep} & $\blue{0.03}$ & $\mathbf{0.04}$ & $\mathbf{0.04}$ & $\mathbf{0.04}$ & $\blue{0.16}$ & $\blue{0.20}$ & $\blue{0.19}$ & $\mathbf{0.18}$ \\
    \midrule
        \textbf{MSBM} & $\mathbf{0.018} \pm$ 4e-4 & $\mathbf{0.049} \pm$  3e-2 & $\mathbf{0.038} \pm$ 5e-4  & $\blue{0.05} \pm$ 9e-4 & $\mathbf{0.129} \pm$ 3e-3 & $\mathbf{0.1895} \pm$ 1e-2 & $\mathbf{0.1772} \pm$ 2e-3 & $\blue{0.1997} \pm$ 4e-3 \\
    \bottomrule
    \end{tabular}
\begin{tablenotes}
\item $\dag$  result from~\citep{chen2023deep}.
\end{tablenotes}
\label{table:RNAsc-DMSB_appx} 
\end{table}

\begin{table}[h!]
\centering
    \caption{Full results over $3$ different seeds. Performance on the 5-dim PCA of hESC dataset. $\calW_2$ is compute between test $\rho_{t_i}$ and generated $\hat{\rho}_{t_i}$ by simulating the dynamics from test $\rho_{t_0}$. }
\begin{tabular}{l|c c|c}
        \toprule
         & \multicolumn{2}{c}{$\calW_2$ ↓} & \multicolumn{1}{c}{Runtime} \\
        \cmidrule(lr){2-4}
        Methods & $t_1$ & $t_3$ & hours \\
        \midrule        TrajectoryNet$^{\dag}$ & 1.30 & 1.93 & 10.19 \\
        DMSB$^{\dag}$ & 1.10 & 1.51 & 15.54\\
        SBIRR$^{\dag}$ & \textbf{1.08} & \blue{1.33} & 0.36 (0.38)$^{*}$ \\
        \midrule
        \textbf{MSBM} (Ours) & \blue{1.083} $\pm$ 7e-3  & \textbf{1.304} $\pm$ 3e-2 & \textbf{0.09} $\pm$ 1e-2  \\
        \bottomrule
    \end{tabular}
\begin{tablenotes}
\item $\dag$  result from~\citep{shen2024multi}.
\end{tablenotes}\label{tab:hESC_appx}
\end{table}

\begin{table}[!h]
\centering
\caption{Full results over $3$ different seeds. Performance on the 5-dim PCA of EB dataset. $\calW_1$ is computed between test $\rho^{\texttt{te}}_{t_i}$ and generated $\hat{\rho}_{t_i}$ by simulating the dynamics from previous test $\rho^{\texttt{te}}_{t_{i-1}}$.}
\setlength{\tabcolsep}{2.0pt}
\begin{tabular}{l|ccccc}
\toprule
         & \multicolumn{5}{c}{$\calW_1 \downarrow$}   \\
        \cmidrule(lr){2-6} 
Methods & $t_1$ & $t_2$ & $t_3$ & $t_4$ & Mean \\
\midrule
Neural SDE$^\dag$~\citep{li2020scalable}   & $0.69$ & $0.91$ & $0.85$ & $0.81$ & $0.82$ \\
TrajectoryNet$^\dag$~\citep{tong2020trajectorynet}   & $0.73$ & $1.06$ & $0.90$ & $1.01$ & $0.93$ \\
IPF (GP)$^\dag$~\citep{vargas2021solving}        & $0.70$ & $1.04$ & $0.94$ & $0.98$ & $0.92$ \\
IPF (NN)$^\dag$~\citep{bortoli2021diffusion}     & $0.73$ & $0.89$ & $0.84$ & $0.83$ & $0.82$ \\
SB-FBSDE$^\dag$~\citep{chen2022likelihood}    & $\mathbf{0.56}$ & $0.80$ & $1.00$ & $1.00$ & $0.84$ \\
NLSB$^\dag$~\citep{koshizuka2022neural}        & $0.68$ & $0.84$ & $0.81$ & $0.79$ & $0.78$ \\
OT-CFM$^\dag$~\citep{tong2024improving}        & $0.78$ & $0.76$ & $0.77$ & $\blue{0.75}$ & $0.77$ \\
WLF-SB$^\ddagger$~\citep{neklyudov2023computational}  & $\blue{0.63}$ & $\blue{0.79}$ & $0.77$ & $\mathbf{0.75}$ & $\blue{0.73}$ \\
\midrule
MSBM (Ours) & $0.64 \pm$ 7e-3 & $\mathbf{0.73} \pm$ 8e-3 & $\mathbf{0.72} \pm$ 1e-2 & $\mathbf{0.73} \pm$ 9e-3 & $\mathbf{0.71} \pm $ 7e-3 \\
\bottomrule
\end{tabular}\label{table:RNAsc-NLSB_appx}
\begin{tablenotes}
\item $\dag$ result from~\citep{koshizuka2022neural}, $\ddagger$ result from~\citep{neklyudov2023computational}.
\end{tablenotes}
\end{table}

\end{document}